\documentclass[11pt]{article}

\usepackage{amsmath,epsfig}
\usepackage{amssymb}
\usepackage{algorithm}
\usepackage{algorithmic}
\usepackage{subfigure}
\usepackage{amsthm}

\newcommand{\tr}{\ensuremath{ \mathrm{tr}}}
\newcommand{\numsamp}{\ensuremath{N}}
\newcommand{\pmes}{\ensuremath{\nu}}
\newcommand{\Demb}{\ensuremath{A}}
\newcommand{\mar}{\ensuremath{\gamma}}
\newcommand{\minm}{\ensuremath{\eta_{m,\delta}}}
\newcommand{\R}{\ensuremath{\mathbb{R}}}
\newcommand{\class}{\ensuremath{C}}
\newcommand{\Knb}{\ensuremath{Q}}
\newcommand{\M}{\ensuremath{\mathcal{M}}}
\newcommand{\half}{\ensuremath{\frac{1}{2}}}

\newtheorem{theorem}{Theorem}
\newtheorem{definition}{Definition}
\newtheorem{proposition}{Proposition}












\bibliographystyle{elsarticle-num}

\begin{document}


\title{Nonlinear Supervised Dimensionality Reduction via Smooth Regular Embeddings}


%
%

\author{Cem \"Ornek  and Elif Vural \\      
       Department of Electrical and Electronics Engineering, METU, Ankara
        }

\date{}

\maketitle

\begin{abstract}
The recovery of the intrinsic geometric structures of data collections is an important problem in data analysis. Supervised extensions of several manifold learning approaches have been proposed in the recent years. Meanwhile, existing methods primarily focus on the embedding of the training data, and the generalization of the embedding to initially unseen test data is rather ignored. In this work, we build on recent theoretical results on the generalization performance of supervised manifold learning algorithms. Motivated by these performance bounds, we propose a supervised manifold learning method that computes a nonlinear embedding while constructing a smooth and regular interpolation function that extends the embedding to the whole data space in order to achieve satisfactory generalization. The embedding and the interpolator are jointly learnt such that the Lipschitz regularity of the interpolator is imposed while ensuring the separation between different classes. Experimental results on several image data sets show that the proposed method outperforms  traditional classifiers and the supervised dimensionality reduction algorithms in comparison in terms of classification accuracy in most settings.

\textbf{Keywords:}
Manifold learning, dimensionality reduction, supervised learning, out-of-sample, nonlinear embeddings

\end{abstract}



\section{Introduction}
\label{sec:intro}

In many data analysis applications, collections of data are acquired in a high-dimensional ambient space; however, the intrinsic dimension of data is much lower. For instance, the face images of a person reside in a high-dimensional space, however, they are concentrated around a low-dimensional manifold that can be parameterized with a few variables such as pose and illumination parameters. An important problem of interest in data analysis has been the learning of low-dimensional models that provide suitable representations of data for accurate classification. Many supervised manifold learning methods have been proposed in the recent years that aim to enhance the separation between training samples from different classes while respecting the geometric structure of data manifolds. However, the generalization capabilities of such methods to initially unavailable novel samples have rather been overlooked so far. In this work, we propose a nonlinear supervised dimensionality reduction method that builds on theoretically established generalization bounds for manifold learning.

Classical methods such as LDA and Fisher's linear discriminant reduce the dimensionality of data by learning a projection so that the between-class separation is increased while the within-class separation is reduced. In the recent years, much research effort has focused on the discovery of low-dimensional structures in data sets, which gave rise to the topic of manifold learning \cite{Tenenbaum00}, \cite{Roweis00}, \cite{Belkin03},  \cite{He04}, \cite{Donoho03}, \cite{Zhang2005}. Following these works, many supervised extensions of methods such as the Laplacian eigenmaps algorithm \cite{Belkin03} have been proposed. Linear dimensionality reduction methods such as  \cite{Sugiyama07}, \cite{Hua12}, \cite{Yang11}, \cite{Zhang12}, \cite{Li13}, \cite{Cui12}, \cite{Wang09}, \cite{YuSZH16} learn a linear projection of training samples onto a lower-dimensional domain, where  the distance between samples from different classes are increased and the distances within the same class are decreased. Most of these methods include a structure preservation objective as well, which aims to map nearby samples in the original domain to nearby locations in the new domain of embedding. Nonlinear methods such as \cite{Raducanu12} pursue a similar objective in the learnt embedding; however, the embedding is given by a pointwise nonlinear mapping instead of a linear projection.

The performance of linear methods depends largely on the distribution of the data in the original ambient space, since the distribution of the data after the embedding is strictly dependent on the original distribution via a linear projection. Nonlinear dimensionality reduction methods such as \cite{Raducanu12} have greater flexibility in the learnt representation. However, two critical issues arise concerning supervised dimensionality reduction methods: First, most nonlinear methods compute a pointwise mapping only for the initially available data samples. In order to generalize them to initially unavailable points, an interpolation needs to be done, which is called the out-of-sample extension of the embedding. Second, existing dimensionality reduction methods focus on the properties of the computed embedding only as far as the training samples are concerned: Existing algorithms mostly aim to increase the between-class separation and preserve the local structure, however, only for the training data. Meanwhile, the important question is how well these algorithms generalize to test data. This question is even more critical for nonlinear dimensionality reduction methods, as the classification performance obtained on test data will not only depend on the properties of the embedding of the training data, but also on the properties of the interpolator used for extending the embedding to the whole ambient space. Several methods have been proposed to solve the out-of-sample extension problem, such as unsupervised generalizations with smooth functions \cite{Bengio04}, \cite{QiaoZWZ13}, \cite{ChenWG13}, \cite{PeherstorferPB11} or semi-supervised interpolators \cite{VuralG16}. These methods intend to generalize an already computed embedding to new data and are constrained by the initially prescribed coordinates for training data. Meanwhile, the best strategy for achieving satisfactory generalization to test data would not consist in learning the embedding and the interpolation sequentially, but rather in learning them in a joint and coherent manner.

In this work, we propose a nonlinear supervised manifold learning method for classification where the embeddings of training data are learned and optimized in a joint way along with the interpolator that extends the embedding to the whole ambient space. A distinctive property of our method is the fact that it explicitly aims to have good generalization to test data in the learning objective. In order to achieve this, we build on the previous work \cite{VuralGTR} where a theoretical analysis of supervised manifold learning is proposed. The theoretical results in \cite{VuralGTR} show that for good classification performance,  the separation between different classes in the embedding of training data needs to be sufficiently high, while at the same time the interpolation function that extends the embedding to test data must be sufficiently regular. For good generalization to initially unavailable test samples, a compromise needs to be found between these two important criteria. In this work, we adopt radial basis function interpolators for the generalization of the embedding, and learn the embedding of the training data and the parameters of the interpolator, i.e., the coefficients and the scale parameter of the interpolation function, at the same time with a joint optimization algorithm. The analysis in  \cite{VuralGTR} characterizes the regularity of an interpolator via its Lipschitz regularity. We first derive an upper bound on the Lipschitz constant of the interpolator in terms of the parameters of the embedding. Then, relying on the theoretical analysis in \cite{VuralGTR}, we propose to optimize an objective function that maximizes the separation between different classes and preserves the local geometry of training samples, while at the same time minimizing an upper bound on the Lipschitz constant of the RBF interpolator. We propose an alternating iterative optimization scheme that first updates the embedding coordinates, and then the interpolator parameters in each iteration. We test the classification performance of the proposed method on several real data sets and show that it outperforms the supervised manifold learning methods in comparison and traditional classifiers.

The contributions of this paper with respect to previous works are the following:

\begin{itemize}

\item The generalization capability of the classifier resulting from a nonlinear supervised embedding is considered during the learning of the embedding for the first time.

\item An embedding along with a continuous interpolator is learnt with an optimization objective based on recent theoretical results on the performance of supervised manifold learning methods.

\item We show that enforcing the Lipschitz regularity of the interpolator function in addition to the separation between the different classes improves the accuracy of the classifier in most experimental settings.

\end{itemize}

The rest of the paper is organized as follows. In Section \ref{sec:related_work}, we overview the related work. In Section \ref{ssec:theo_persp}, we review the recent theoretical results that motivate our method and in Section \ref{sec:proposed_method}, we formulate the supervised manifold learning problem and present the proposed algorithm. In Section \ref{sec:exp_results}, we evaluate our method with experiments on several face and object data sets. Finally, we conclude in Section \ref{sec:conclusion}.

\section{Related Work}
\label{sec:related_work}

\subsection{Unsupervised manifold learning}

Manifold learning algorithms aim to compute a low-dimensional representation of data that is coherent with its intrinsic geometry, which is characterized in several different ways via geodesic distances \cite{Tenenbaum00}, locally linear representations \cite{Roweis00}, second order characteristics  \cite{Donoho03}, and graph spectral decompositions \cite{Belkin03}, \cite{He04} in previous works. When the underlying manifold model is not analytically known, it is common to represent data with a graph model. Given a set of data samples $X=\{x_i\}_{i=1}^N\subset \mathbb R^n$, most manifold learning methods build a data graph such that two samples $x_i$ and $x_j$ are linked with an edge when they are nearest neighbors of each other ($x_i \sim x_j$). The edge weights $w_{ij}$ are typically assigned with respect to a similarity measure between neighboring samples. 

Denoting as $W$ the weight matrix  containing the edge weights $w_{ij}$, and the degree matrix $D$ as the diagonal matrix having as $i$-th entry the total edge weight $d(i)=\sum_{x_j \sim x_i} w_{ij}$  between $x_i$ and its neighbors, the graph Laplacian matrix is given by $L=D-W$. The Laplacian eigenmaps algorithm \cite{Belkin03} maps each data sample $x_i \in \mathbb R^n$ to a sample $y_i \in \mathbb R^d$ such that the following optimization problem is solved
\begin{equation}
\min_Y \tr(Y^T L \, Y) = \min_Y \sum_{i \sim j} \| y_i - y_j \|^2 w_{ij}, \text{ s.t. } Y^T Y = I
\end{equation}
where $Y=[y_1 \ y_2 \ \dots \ y_N]^T$ is the data matrix consisting of the coordinates to be learned and $I$ is the identity matrix. Hence, the Laplacian eigenmaps algorithm formulates the new coordinates of data as the functions that have the slowest variation on the data graph, so that neighboring samples in the original domain are mapped to nearby coordinates in the new domain of embedding. The locality preserving projections (LPP) \cite{He04} algorithm has the same objective; however, the new coordinates $y_i=P^T x_i$ are constrained to be given by a linear projection of the original coordinates.

\subsection{Supervised manifold learning}

Many supervised manifold learning algorithms have been proposed in the recent years, most of which are extensions of the Laplacian eigenmaps method. These methods seek to embed data into new coordinates such that neighboring samples in the same class are mapped to nearby coordinates, while samples from different classes are mapped to distant points. This is often represented as an objective function that minimizes $\tr(Y^T L_w \, Y)$ while maximizing $\tr(Y^T L_b \, Y) $, where $L_w$ and $L_b$ are the within-class and between-class Laplacian matrices, derived respectively from the within-class and between-class weight matrices $W_w$ and $W_b$. The between-class edges in $W_b$ can be set with respect to different strategies in different methods. The supervised dimensionality reduction problem is formulated in \cite{Raducanu12} as
\begin{equation} 
\label{eq:supLap_formal}
\min_{Y} \tr(Y^T L_w Y) - \mu \,  \tr(Y^T L_b Y) \text{ subject to } Y^T Y = I
\end{equation}
where $\mu>0$ is a constant that adjusts the weight between the structure preservation and the class-aware discrimination terms. Similar formulations are adopted in \cite{LDE},  \cite{Hua12}; however, under the linear projection constraint $y_i=P^T x_i$.  The recent work in \cite{MaronidisTP15} is based on a similar objective, which also exploits subclass information by identifying favorable data connections within the same class. A local adaptation of the Fisher discriminant analysis is proposed in \cite{Sugiyama07}. A projection matrix $P$ is sought so that the following objective is maximized
\begin{equation}
\arg \max_P \tr( (P^T S_w P)^{-1} P^T S_b P )
\end{equation}
where $S_w$ and $S_b$ are the within-class and between-class scatter matrices obtained with  the edge weights of samples on the data graph. The methods in \cite{Yang11}, \cite{Zhang12}, \cite{Cui12}, \cite{Wang09}, \cite{GaoMZGL13} also optimize a similar Fisher-like objective by maximizing the between-class local scatter and minimizing the within-class local scatter. The  method in \cite{LFDP} proposes a scatter discriminant analysis to learn embeddings of local image descriptors. In another recent work \cite{ChenWLL17}, the optimization of within- and between-class local scatters is formulated via $\ell_1$-norms for robustness against image degradations.

Several supervised linear dimensionality reduction methods are based on preserving locally linear representations of data. The algorithm in \cite{ESLLE} provides a supervised extension of the well-known LLE method \cite{Roweis00} by introducing a label-dependent distance function; however, it is a nonlinear method without an explicit consideration of the out-of-sample problem.  The Neighborhood Preserving Discriminant Embedding method presented in \cite{NPDE} is a linear dimensionality reduction method extending the unsupervised NPE method \cite{HeCYZ05} based on locally linear representations. The Hybrid Manifold Embedding method \cite{HyME} computes a locally linear but globally  nonlinear mapping function by first grouping the data into local subsets via geodesic clustering and then learning a supervised embedding of each cluster. The supervised dimensionality reduction method in \cite{ZhouS17} partitions the manifold into local regions and takes into account the variation of the embedding along tangent directions of the manifold.

\subsection{Continuous embeddings via nonlinear functions}

The vast majority of supervised dimensionality reduction methods relies on linear projections, and the methods computing a continuous supervised nonlinear embedding are less common. The generalization of the embedding of a given set of training samples to the whole space via continuous interpolation functions is known as the out-of-sample extension problem. The out-of-sample problem is of critical importance especially for nonlinear supervised manifold learning methods computing a pointwise embedding only at training samples. 

The Nystr\"om method \cite{Bengio04} proposes an out-of-sample solution for unsupervised manifold learning algorithms that embed training samples to coordinates computed from the eigenvectors of a symmetric similarity matrix $M$, such that the entries of the similarity matrix are obtained from a kernel function $K$ as $M_{ij}= K(x_i, x_j)$. Let $Y=[y_1 \ y_2 \ \dots \ y_N]^T$ be the matrix consisting of the embeddings $y_i \in \R^d$ of the training samples $x_i \in \R^n$, such that the $k$-th column $Y_k$ of $Y$ is the $k$-th eigenvector of $M$ with $M \, Y_k = \lambda_k Y_k$. Then, the Nystr\"om method maps a previously unseen test sample $x$ to the point $y(x)=[y^1(x) \ \dots \ y^d(x)]^T$, such that its $k$-th coordinate is given by 
\[
y^k(x) = \frac{1}{\lambda_k} \sum_{i=1}^N Y_{ik} K(x, x_i),
\]
which is shown to extend the embedding of the training samples to the whole ambient space. The Nystr\"om extension can be applied to many common unsupervised manifold learning algorithms including ISOMAP \cite{Tenenbaum00}, LLE \cite{Roweis00}, and Laplacian eigenmaps \cite{Belkin03}, by suitably identifying a kernel-induced similarity matrix $M$ associated with these methods. However, the Nystr\"om method is often inappropriate for the out-of-sample extension of supervised manifold learning methods, since in this case the similarity matrix $M$ is often class-dependent and can no longer be induced from a unique kernel function as $M_{ij}= K(x_i, x_j)$.

Another possible way to obtain the out-of-sample generalization of an embedding is to employ  linear representations. The out-of-sample extension for a test sample $x$ is obtained in \cite{DornaikaR13}, by first computing a sparse representation of $x$ in terms of the training samples as 
\[
x \approx \sum_{i=1}^N a_i x_i 
\]
where $a=[a_1 \ \dots \ a_N]^T$ is a sparse coefficient vector. Regarding the magnitudes of the sparse coefficients as a measure of similarity between $x$ and the training samples, the embedding $y$ of the test sample $x$ is then obtained as a linear combination of the embeddings $y_i$ of the training samples as
\[
y= \frac{\sum_{i=1}^N |a_i | y_i }{\sum_{i=1}^N |a_i |}.
\]

Besides such unsupervised out-of-sample extension methods, the method in \cite{VuralG16} proposes a solution for the out-of-sample problem in a semi-supervised setting. Given a data set containing labeled and unlabeled samples, and the embeddings of the labeled samples learnt via any supervised manifold learning algorithm, the method in \cite{VuralG16} first computes an RBF out-of-sample interpolator that fits the learnt embedding to the labeled training data. This RBF interpolator is then gradually refined using the unlabeled training samples, such that the RBF interpolator and the estimated class labels of the unlabeled samples are jointly updated in an iterative learning procedure.

The above out-of-sample extension strategies can be coupled with several supervised and unsupervised nonlinear manifold learning algorithms, and can be used to extend priorly learnt embeddings to the whole ambient space. Meanwhile, there also exist nonlinear dimensionality reduction algorithms that learn a specific embedding along with its interpolation function extending the embedding to the whole space. The unsupervised manifold learning method \cite{QiaoZWZ13} maps the training samples to a lower-dimensional space with a locally linear reconstruction objective as in the LLE algorithm \cite{Roweis00}; however, under the constraint that the embedding coordinates be polynomial functions of data samples. The polynomial coefficients are thus optimized to minimize the reconstruction error of the locally linear representation. Previously unseen test data can then be embedded into the new domain via the learnt polynomials. Finally, the method in \cite{OrsenigoV12} can be seen as a supervised extension of ISOMAP \cite{Tenenbaum00} that also addresses the problem of extension to novel samples. The training samples are first embedded via a modified version of the ISOMAP algorithm by using a supervised distance function that takes the class labels into account. The learnt embedding is then generalized to the whole space via kernel ridge regression.

The focus of our work is essentially different from that of out-of-sample extension algorithms such as \cite{Bengio04}, \cite{VuralG16}, and \cite{DornaikaR13}, as these methods seek an extension of an already computed embedding to the whole space, while we also address the question of what the embedding should be. Then, compared to manifold learning algorithms such as \cite{QiaoZWZ13} and \cite{OrsenigoV12} that learn an embedding along with its extension, the main difference of our method is that it explicitly takes into account the  performance of the generalization of the learnt classifier to test data, by formulating a supervised learning objective motivated by the recent theoretical generalization bounds of supervised manifold learners.


A possible solution to get around the limitations of linear embeddings while avoiding the out-of-sample problem of nonlinear embeddings is to employ kernel extensions of linear dimensionality reduction methods. The kernel extensions of many well-known dimensionality reduction methods such as PCA, LDA, ICA exist \cite{ScholkopfSM97}, \cite{BaudatA00}, \cite{BachJ02}. The construction of continuous functions via smooth kernels is also quite common in Reproducing Kernel Hilbert Space (RKHS) methods \cite{BelkinNS06}, \cite{Aronszajn50}; however, these methods differ from supervised manifold learning methods  in that the learnt mapping often represents class labels of data samples rather than their coordinates in a lower-dimensional domain of embedding as in manifold learning. The choice of the kernel type and parameters can be critical in kernel methods. Several previous works in the semi-supervised learning literature have addressed the learning of kernels by combining known kernels  \cite{DaiY07}, \cite{ArgyriouHP05}. A two-stage multiple kernel learning method is recently proposed in \cite{TSMKL} for supervised dimensionality reduction, which finds a nonlinear mapping by optimizing between-class and within-class distances.

\section{Theoretical Bounds in Supervised Manifold Learning}
\label{ssec:theo_persp}

Nonlinear dimensionality reduction methods in the literature that minimize objectives  as in \eqref{eq:supLap_formal} often yield embeddings where training samples from different classes are linearly separable, and the local neighborhoods on the same manifold are preserved as imposed by the term involving the within-class graph Laplacian. On the other hand, most existing methods fail to consider how well these embeddings generalize to new test data: When a test sample of unknown class label is mapped to the low-dimensional domain of embedding via an interpolator or an out-of-sample extension method, what is critical is how likely the test sample is to be correctly classified. This depends both on the coordinates of the embedding for the training samples and the interpolator used to generalize the embedding to the whole ambient space. In the previous work \cite{VuralGTR}, this problem is theoretically studied. In this section,  we overview some main results from \cite{VuralGTR}, which will provide a basis for the manifold learning algorithm we propose in Section \ref{sec:proposed_method}.

The classification problem is analyzed in \cite{VuralGTR} in a setting where each data sample in the training set $X =\{ x_i \}_{i=1}^{\numsamp}$   is assumed to belong to one of the classes $\{1, 2, \dots, M\}$ and the samples of each class $m$ are distributed according to the probability measure $\pmes_m $. Let $\M_m$ denote the support of the probability measure $\pmes_m $. Denoting as $B_\delta(x)$ an open ball of radius $\delta$ around a point $x$
\begin{equation*}
B_\delta(x)=\{ u \in \R^n: \| x-u \| < \delta  \},
\end{equation*}
the following definition introduces the smallest possible measure for a ball $B_\delta(x)$ of radius $\delta$ centered around a point in the support $\M_m$ of the $m$-th class.
\begin{equation*}
\minm := \inf_{x \in \M_m} \pmes_m(B_\delta(x))
\end{equation*}

Next, we recall the definition of Lipschitz continuity for a function $f$.

\begin{definition}
A function $f: \R^n \rightarrow \R^d$ is Lipschitz continuous with constant $L>0$ if for any $u, v \in \R^n$
\begin{equation*}
\| f(u) - f(v) \| \leqslant L \, \| u - v \|.
\end{equation*}
\end{definition}


The analysis in \cite{VuralGTR}  considers supervised manifold learning algorithms that compute the embedding $y_i \in \R^d$ of each training sample $x_i \in \R^n$. It is assumed that a test sample $x$ of unknown class label is mapped to $\R^d$ via an interpolation function $f: \R^n \rightarrow \R^d$. The following main result from \cite{VuralGTR} gives a bound on the classification error, when the estimate $\hat \class(x)$ of the class label $\class(x)$ of $x$ is estimated via nearest-neighbor classification in $\R^d$ as $\hat \class(x) = \class (x_i)$, where
\begin{equation*}
i  = \arg \min_j \| y_j - f(x) \|.
\end{equation*}

\begin{theorem}
\label{thm:error_genf_nnclass}

Let $X =\{ x_i \}_{i=1}^{\numsamp} \subset \R^n$ be a set of training samples such that each $x_i$ is drawn i.i.d.~from one of the probability measures $\{\pmes_m \}_{m=1}^M$, with $\pmes_m$ denoting the probability measure of the $m$-th class. Let $Y=\{ y_i \}_{i=1}^{\numsamp}$ be an embedding of $X$ in $\R^d$ such that there exist a constant $\mar>0$ and a constant $\Demb_\delta$ depending on $\delta>0$ satisfying
\begin{equation*}
\begin{split}
\| y_i - y_j \| &< \Demb_\delta, \text{ if } \| x_i - x_j  \| \leqslant 2 \delta \text{ and } \class(x_i)=\class(x_j) 
\\
\| y_i - y_j \| &> \mar,  \  \text{ if } \class(x_i) \neq \class(x_j).
\end{split}
\end{equation*}
For given $\epsilon>0$ and $\delta>0$, let $f: \R^n \rightarrow \R^d$ be a Lipschitz-continuous interpolation function with constant $L$, which maps each $x_i$ to $f(x_i) =y_i$, such that 
\begin{equation}
\label{eq:cond_sep_genf_nn}
L \delta + \sqrt{d} \epsilon +   \Demb_{\delta} \leqslant \frac{ \mar}{ 2}.
\end{equation}
Consider a test sample $x$ randomly drawn according to the probability measure $\pmes_m$ of class $m$. For any $\Knb>0$, if $X$ contains at least $\numsamp_m$ training samples from the $m$-th class drawn i.i.d.~from $\pmes_m$  such that 
\begin{equation*}
\numsamp_m > \frac{\Knb}{\minm}
\end{equation*}
then the probability of correctly classifying $x$ with nearest-neighbor classification in  $\R^d$ is lower bounded as
\begin{equation}
\label{eq:prob_lb_rbfint}
P\left( \hat{\class}(x) = m \right) \geqslant 
1 - \exp \left( - \frac{2 \, (\numsamp_m \, \minm - \Knb)^2 }{\numsamp_m} \right)
 - 2 d \exp \left( - \frac{  \Knb \, \epsilon^2 }{ 2 L^2 \delta^2} \right).
\end{equation}
\end{theorem}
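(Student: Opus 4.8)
The plan is to show that, on a high-probability event, the nearest training embedding to $f(x)$ belongs to class $m$, which forces $\hat\class(x)=m$. Writing $d_{\mathrm{same}}=\min_{j:\,\class(x_j)=m}\|f(x)-y_j\|$ and $d_{\mathrm{diff}}=\min_{k:\,\class(x_k)\neq m}\|f(x)-y_k\|$, correct classification is implied by $d_{\mathrm{same}}<d_{\mathrm{diff}}$. I would control both quantities through a single deterministic reference point, namely the conditional mean embedding $\mu=\mathbb{E}[f(X)\mid X\in B_\delta(x),\ \class(X)=m]$, together with its empirical counterpart $\bar y$, the average of the embeddings $y_j$ of the class-$m$ training samples falling in $B_\delta(x)$. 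The two error terms in \eqref{eq:prob_lb_rbfint} will arise, respectively, from the event that too few class-$m$ samples land in $B_\delta(x)$ and from the event that $\bar y$ deviates from $\mu$.

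For the first (covering) event, I would note that each class-$m$ training sample lands in $B_\delta(x)$ independently with probability $\pmes_m(B_\delta(x))\geqslant\minm$, since $x\in\M_m$. Applying Hoeffding's inequality to this sum of $\numsamp_m$ indicators, whose mean exceeds $\numsamp_m\minm>\Knb$, bounds the probability that fewer than $\Knb$ of them fall in the ball by $\exp(-2(\numsamp_m\minm-\Knb)^2/\numsamp_m)$, the first term. For the second (concentration) event, I would condition on the set of in-ball class-$m$ samples (at least $\Knb$ of them on the covering event); their embeddings are i.i.d.\ draws from the conditional law, and by the Lipschitz hypothesis each coordinate $f(X)^{(\ell)}$ lies in an interval of length $2L\delta$ around $f(x)^{(\ell)}$. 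Hoeffding's inequality per coordinate, combined with a union bound over the $d$ coordinates, then gives $\|\bar y-\mu\|\leqslant\sqrt{d}\,\epsilon$ except with probability $2d\exp(-\Knb\epsilon^2/(2L^2\delta^2))$, the second term. I would also use Jensen's inequality and the Lipschitz bound to obtain the deterministic estimate $\|f(x)-\mu\|\leqslant L\delta$, hence $\|f(x)-\bar y\|\leqslant L\delta+\sqrt{d}\,\epsilon$ on this event.

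On the intersection of the two events I would run a purely geometric argument. Any two in-ball class-$m$ samples lie within distance $2\delta$ of each other and share a label, so the within-class hypothesis gives $\|y_i-y_j\|<\Demb_\delta$; averaging yields $\|\bar y-y_j\|\leqslant\Demb_\delta$ for every such $y_j$. Chaining triangle inequalities then produces $d_{\mathrm{same}}\leqslant\|f(x)-\bar y\|+\Demb_\delta\leqslant L\delta+\sqrt{d}\,\epsilon+\Demb_\delta$, while for any different-class $y_k$ the between-class hypothesis $\|y_j-y_k\|>\mar$ gives $\|f(x)-y_k\|\geqslant\|y_j-y_k\|-\|\bar y-y_j\|-\|f(x)-\bar y\|>\mar-(L\delta+\sqrt{d}\,\epsilon+\Demb_\delta)$, so $d_{\mathrm{diff}}>\mar-(L\delta+\sqrt{d}\,\epsilon+\Demb_\delta)$. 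Condition \eqref{eq:cond_sep_genf_nn} makes the upper bound on $d_{\mathrm{same}}$ at most $\mar/2$ and the lower bound on $d_{\mathrm{diff}}$ strictly above $\mar/2$, forcing $d_{\mathrm{same}}<d_{\mathrm{diff}}$ and hence correct classification; a union bound over the two failure events yields \eqref{eq:prob_lb_rbfint}. The step I expect to require the most care is the second concentration bound: the number of in-ball samples is itself random, so applying Hoeffding conditionally on the covering event—and arguing that using at least $\Knb$ i.i.d.\ conditional draws only improves the exponent—must be handled carefully to keep the independence structure intact.
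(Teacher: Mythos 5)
The paper does not prove Theorem~\ref{thm:error_genf_nnclass}; it is imported verbatim from the cited prior work, so there is no in-paper proof to compare against. Your argument is correct and self-contained, and it recovers both exponential terms with exactly the right constants --- Hoeffding on the $\numsamp_m$ ball-membership indicators with range $1$ gives $\exp(-2(\numsamp_m\minm-\Knb)^2/\numsamp_m)$, and per-coordinate Hoeffding on averages of at least $\Knb$ conditionally i.i.d.\ variables confined to intervals of length $2L\delta$, with a union bound over $d$ coordinates, gives $2d\exp(-\Knb\epsilon^2/(2L^2\delta^2))$ --- which, together with the triangle-inequality chain producing precisely $L\delta+\sqrt{d}\epsilon+\Demb_\delta$ against $\mar/2$, makes it all but certain this is the intended derivation. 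The one caveat worth recording is that the conditional-concentration step (and indeed the theorem statement itself) treats $f$ as a fixed function, so the embeddings of the in-ball samples are genuinely i.i.d.; if $f$ were allowed to depend on the training draw, as it does in the algorithm the theorem motivates, the independence underlying the second Hoeffding application would need a separate justification.
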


Theorem \ref{thm:error_genf_nnclass} considers an embedding such that nearby training samples from the same class are mapped to nearby coordinates, while training samples from different classes are separated by a distance of at least $\mar$ in the low-dimensional domain of embedding. An illustration of the setting considered in the theorem is given in Figure \ref{fig:illus_embedding}. The parameter $\mar$ can be considered as the separation margin of the embedding. Then for such an embedding, the condition in \eqref{eq:cond_sep_genf_nn} assumes an interpolator $f$ that is sufficiently regular (with a sufficiently small Lipschitz constant $L$) compared to the separation margin $\mar$. Finally, a probabilistic classification guarantee is given for this setting in \eqref{eq:prob_lb_rbfint}, which states that the misclassification probability decreases exponentially with the number of samples under these assumptions. The above result considers the NN classifier in the final stage, which is a simple and widely used classification strategy for which efficient algorithms exist \cite{VajdaS16}. An extension of this result is also presented in \cite{VuralGTR} which studies the performance of classification when a linear classifier is used instead of nearest-neighbor classification in the low-dimensional domain. If a linear classifier is used in the domain of embedding, a very similar condition to  \eqref{eq:cond_sep_genf_nn} relating the interpolator regularity to the separation margin is obtained, which yields a similar probabilistic bound on the misclassification error.

\begin{figure}[t]
  \centering
  \centerline{\includegraphics[width=6.0cm]{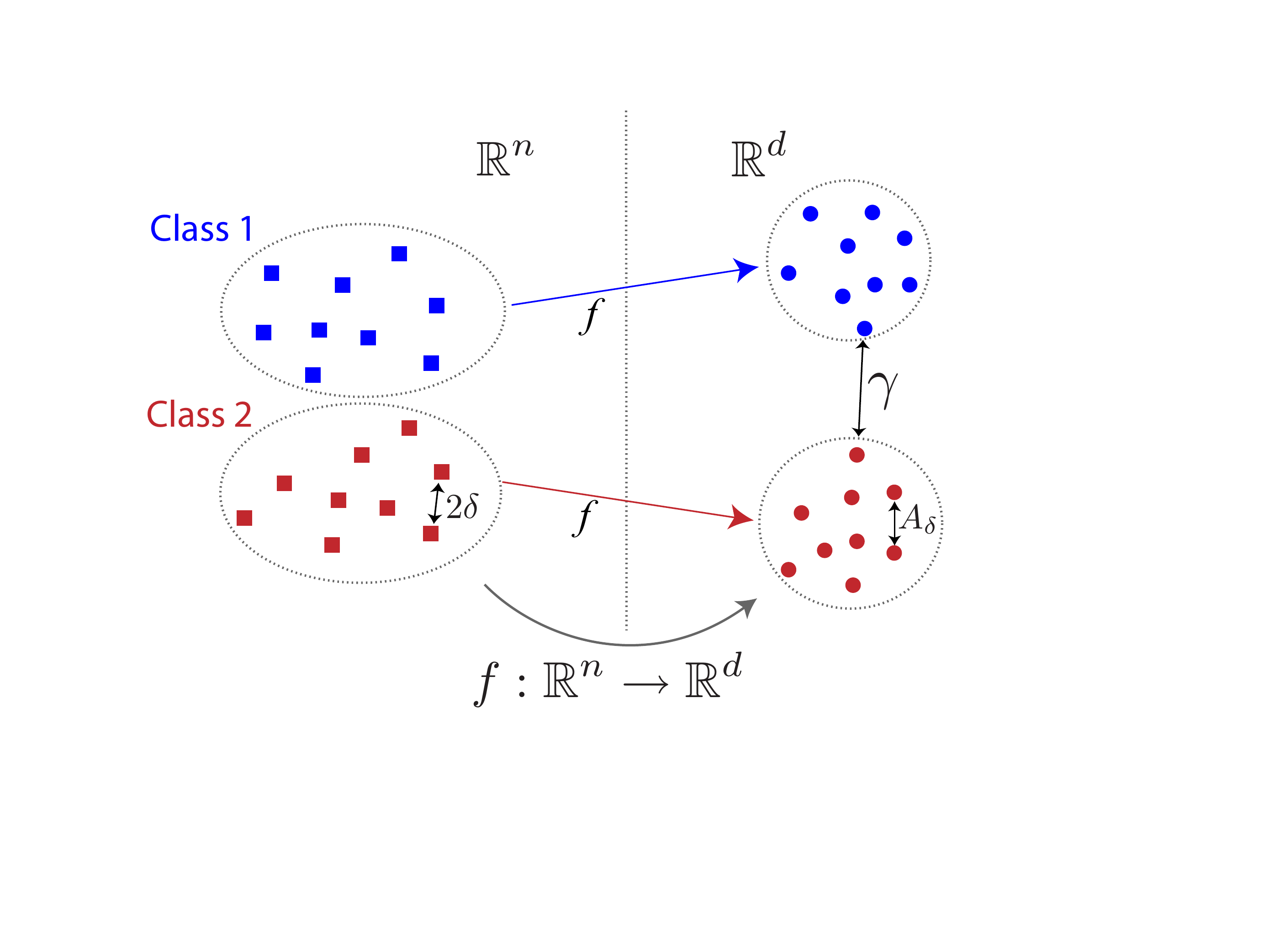}}
  \caption{Illustration of the setting considered in Theorem \ref{thm:error_genf_nnclass}. Samples from the same class having a distance of $2\delta$ are embedded to points at most $\Demb_\delta$ apart. Samples from different classes are separated by at least $\mar$.}\medskip
  \label{fig:illus_embedding}
\end{figure}

While most supervised manifold learning methods in the literature focus on achieving a large separation between the training samples from different classes in the embedding, the condition  \eqref{eq:cond_sep_genf_nn} in the above theoretical analysis points to a critical compromise that must be sought in supervised dimensionality reduction: Achieving high separation between different classes in the training set does not necessarily mean that the classifier will generalize well to test samples. The presence of a sufficiently regular interpolator is furthermore needed, so that the Lipschitz constant $L$ of the interpolator remains below a threshold involving the separation margin $\mar$ of the embedding. From this perspective, depending on the data distribution, increasing the separation too much has the risk of forcing the interpolator to be too irregular, which may in turn cause condition  \eqref{eq:cond_sep_genf_nn} to fail. What we propose in this work is to learn the embedding $\{ y_i \}_{i=1}^N$ together with the interpolator $f$ in view of the condition \eqref{eq:cond_sep_genf_nn}, which is detailed in the next section.

\section{Proposed Nonlinear Supervised Smooth Embedding Method}
\label{sec:proposed_method}

In this section, we present our proposed supervised dimensionality reduction method. We first formulate the manifold learning problem and define an optimization problem based on the perspectives discussed in Section \ref{ssec:theo_persp}. We then describe our algorithm.

\subsection{Formulation of the manifold learning problem}

Given training points $X=\{x_i\}_{i=1}^N\subset \mathbb R^n$ from $M$ classes, our purpose is to learn an embedding of data $Y=\{y_i\}_{i=1}^N\subset \mathbb R^d$ together with a continuous interpolation function $f : \mathbb R^n \rightarrow \mathbb R^d$ , such that $f(x_i)=y_i$. The interpolator $f$ will then be used to classify new  test points $x$ by mapping $x$ to the low-dimensional domain $\R^d$ as $f(x)$, so that examining $f(x) \in \mathbb R^d$ with respect to the embedding $Y$ of the training points with known class labels provides an estimate of the class label of $x$.

Our method relies on the theoretical results presented in Section \ref{ssec:theo_persp}. Recall from Theorem \ref{thm:error_genf_nnclass} that, a necessary condition to obtain good generalization performance is
\[
L \delta + \sqrt{d} \epsilon +   \Demb_{\delta} \leq \frac{ \mar}{ 2}.
\]
In the sequel, we formulate a manifold learning problem in view of this condition, whose purpose is to make the the Lipschitz constant $L$ of the interpolator and the distance $\Demb_{\delta}$ between neighboring points from the same class as small as possible, while making the separation $\gamma$ between different classes as large as possible, in order to increase the chances that the above condition be met.


Let $f(x) = [f^1(x) \ \dots \ f^d(x)] \in \mathbb R^d$, where $f^k(x)$ is the $k$-th dimension of $f(x)$, with $f^k:\mathbb R^n \rightarrow \mathbb R$. We propose to choose the function $f$ as a radial basis function (RBF) interpolator, as RBF interpolators are a well-studied family of functions \cite{Baxter92}, \cite{Piret07} with many desirable properties such as smoothness and adjustable spread around anchor points. Hence, each component $f^k$ of $f$ is of the form
\begin{equation}
\label{eq:rbf_interpolator}
f^k(x) = \sum_{i=1}^N  c_i^k  \phi (\|x-x_i\|)  
\end{equation}
where  $\phi : \R \rightarrow \R^+$ is an RBF kernel, $c_i^k$ are the coefficients, and $x_i$ are the kernel centers. A common choice for the RBF kernel is the Gaussian kernel $\phi (r) = e^{-r^2/\sigma ^2}$, which we also adopt in this work. Under this setting, we now examine our three entities of interest, namely the regularity of the interpolator, the distance between neighboring points from the same class and the separation between different classes.\\

\textit{Interpolator regularity.}
We begin with proposing a Lipschitz constant for $f$ in terms of the function parameters.

\begin{proposition}
Let $L_\phi := \sqrt{2} e^{-\half} \sigma^{-1}$ and let $C$ be the matrix consisting of the RBF coefficients such that $C_{ij}=c_i^j$. Then the RBF interpolator $f$ satisfies for all $u, v \in \R^n$
\[
\|f(u) - f(v)\| \leqslant L\|u-v\|
\]
where $L:=\sqrt{N} L_\phi {\|C\|}_F$.
\end{proposition}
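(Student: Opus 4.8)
The plan is to reduce the vector-valued, multi-center estimate to a Lipschitz bound on a single Gaussian bump, and then reassemble the pieces with two applications of the Cauchy--Schwarz inequality. First I would isolate the scalar building block $g_i(x) := \phi(\|x - x_i\|) = e^{-\|x-x_i\|^2/\sigma^2}$ and establish that each $g_i$ is Lipschitz with constant $L_\phi$. Since $g_i$ is smooth, by the mean value theorem it suffices to bound $\sup_x \|\nabla g_i(x)\|$. A direct computation gives $\nabla g_i(x) = -\tfrac{2}{\sigma^2}(x-x_i)\, e^{-\|x-x_i\|^2/\sigma^2}$, so writing $r = \|x-x_i\|$ we have $\|\nabla g_i(x)\| = \tfrac{2}{\sigma^2}\, r\, e^{-r^2/\sigma^2}$.

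The core estimate is then to maximize the scalar profile $h(r) = r\, e^{-r^2/\sigma^2}$ over $r \geqslant 0$. Setting $h'(r) = e^{-r^2/\sigma^2}\bigl(1 - 2r^2/\sigma^2\bigr) = 0$ yields $r = \sigma/\sqrt{2}$, at which $h$ attains its maximum value $\tfrac{\sigma}{\sqrt{2}}\, e^{-1/2}$. Substituting back gives $\|\nabla g_i(x)\| \leqslant \tfrac{2}{\sigma^2}\cdot\tfrac{\sigma}{\sqrt{2}}\, e^{-1/2} = \sqrt{2}\, e^{-1/2}\, \sigma^{-1} = L_\phi$, and hence $|g_i(u)-g_i(v)| \leqslant L_\phi \|u-v\|$ for all $u,v \in \R^n$. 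This is the step where the specific constant $L_\phi$ of the statement is produced, and it is the only place requiring a calculus argument.

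Finally I would assemble the full bound. For each coordinate $k$ we have $f^k(u)-f^k(v) = \sum_{i=1}^N c_i^k\,\bigl(g_i(u)-g_i(v)\bigr)$, so by Cauchy--Schwarz over the index $i$ together with the single-kernel bound, $\bigl(f^k(u)-f^k(v)\bigr)^2 \leqslant \bigl(\sum_{i} (c_i^k)^2\bigr)\bigl(\sum_i (g_i(u)-g_i(v))^2\bigr) \leqslant \bigl(\sum_{i}(c_i^k)^2\bigr)\, N L_\phi^2 \|u-v\|^2$. Summing over $k = 1,\dots,d$ and recognizing, via the convention $C_{ij}=c_i^j$, that $\sum_{k}\sum_{i}(c_i^k)^2 = \|C\|_F^2$, I obtain $\|f(u)-f(v)\|^2 \leqslant N L_\phi^2 \|C\|_F^2 \, \|u-v\|^2$; taking square roots yields exactly $L = \sqrt{N}\, L_\phi \|C\|_F$.

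I do not anticipate a serious obstacle, as the argument is a gradient bound followed by two Cauchy--Schwarz steps. The only point demanding care is keeping the two Cauchy--Schwarz applications straight and matching the resulting double sum to the Frobenius norm $\|C\|_F$. The constant so obtained is almost certainly not tight, since the uniform bound $|g_i(u)-g_i(v)| \leqslant L_\phi \|u-v\|$ is applied simultaneously for every $i$ (equivalently, the term $\sum_i (g_i(u)-g_i(v))^2 \leqslant N L_\phi^2\|u-v\|^2$ treats all $N$ kernels as maximally active at once); however, tightness is not claimed, and an overestimate of $L$ only makes the regularity constraint we impose in the sequel more conservative.
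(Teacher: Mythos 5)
Your proof is correct and follows essentially the same route as the paper's: a calculus bound producing the kernel Lipschitz constant $L_\phi$, a $\sqrt{N}$ factor from Cauchy--Schwarz over the $N$ centers, and a sum over the $d$ coordinates yielding $\|C\|_F$. The only cosmetic difference is that you bound $\sup_x\|\nabla g_i(x)\|$ of the composed map directly, whereas the paper bounds the one-dimensional derivative of $\phi(r)$ and then applies the reverse triangle inequality to $\big|\,\|u-x_i\|-\|v-x_i\|\,\big|\leqslant\|u-v\|$; both give the same constant.
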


\begin{proof}
We first show that $L_\phi$ is a Lipschitz constant of the RBF kernel.
%
%
It is easy to show that the derivative of $\phi (r) = e^{-r^2/\sigma ^2}$ takes is maximum magnitude at $r=\sigma/\sqrt{2}$, so that for all $r$
\[
\left | \frac{d \phi (r)}{ d r} \right | \leqslant \left | \frac{d \phi (t)}{ d t} \bigg|_{t=\sigma/\sqrt{2}}  \right | 
= \sqrt{2} e^{-\half} \sigma^{-1} = L_\phi.
\]
This upper bound on the derivative magnitude shows that $L_\phi = \sqrt{2} e^{-\half} \sigma^{-1}$ is a Lipschitz constant for the Gaussian kernel as
\[  |\phi(u) - \phi(v)| \leqslant L_\phi |u-v| \]
for all $u, v \in \mathbb R$.

 Then we derive the Lipschitz constant of $f(x)$ as follows. First, observe that
\begin{equation*}
\begin{split}
|f^k(u) - f^k(v)| &= \left| \sum_{i=1}^N c_i^k \big(  \phi (\|u-x_i\|) - \phi(\|v-x_i\|) \big) \right|  \\
&\leqslant \sum_{i=1}^N |c_i^k| \, \left| \phi(\|u-x_i\|) - \phi(\|v-x_i\|) \right|
\end{split}
\end{equation*}
where the second term inside the sum can be upper bounded as
\[
|\phi(\|u-x_i\|) - \phi(\|v-x_i\|)| \leqslant L_\phi \big| \|u-x_i\| - \|v-x_i\|  \big | \leqslant L_\phi \|u-v\|.
\]
This gives in the above equation
\[
|f^k(u) - f^k(v)| \leqslant L_\phi \sum_{i=1}^N |c_i^k| \|u-v\| 
\leqslant  L_\phi  \| c^k \|_1 \| u-v \|
\leqslant \sqrt{N} L_\phi \, \|c^k\| \|u-v\|
\]
where $ c^k = [ c_1^k \dots c_N^k ]^T$ denotes the coefficient vector of the function $f^k$ and $\| \cdot \|_1$ is the $\ell_1$-norm of a vector. Then we have 
\begin{equation*}
\begin{split}
\|f(u) - f(v)\| &= \sqrt{\sum_{k=1}^d{|f^k(u) - f^k(v)|}^2} \leqslant \sqrt{\sum_{k=1}^d N L_\phi^2 {\|c^k\|}^2 {\|u-v\|}^2}  \\
&= \sqrt{N} L_\phi \|u-v\| \sqrt{\sum_{k=1}^d{\|c^k\|}^2} 
=   \sqrt{N} L_\phi \|C\|_F   \|u-v\|
\end{split}
\end{equation*}
%
%
where $\| \cdot \|_F$ denotes the Frobenius norm of a matrix. Hence, we get
\[
\|f(u) - f(v)\| \leqslant L\|u-v\|
\]
where $L=\sqrt{N} L_\phi {\|C\|}_F$ is the Lipschitz constant of $f(x)$.
%
%
%
\end{proof}

When learning an interpolator, we would like to minimize the Lipschitz constant $L=\sqrt{N} L_\phi {\|C\|}_F$ of $f(x)$. From the form \eqref{eq:rbf_interpolator} of the interpolator components and the fact that the interpolator values at training points must correspond to the coordinates of the embedding $y_i = f(x_i) $, we get the relation
\[
\Psi C= Y
\] 
where $\Psi \in \R^{N \times N}$ is the matrix consisting of the values of the RBF kernels with $\Psi_{ij}=\phi(\|x_i-x_j\|)$
%
%
and $Y = [ y_1 \ y_2 \ \dots \ y_N ]^T \in \R^{N \times d}$ is the matrix consisting of the coordinates of the embeddings of the training samples. Then the coefficient matrix is given by $ C = \Psi^{-1} Y$, so that
\begin{equation}
\label{eq:Cmat_normsq}
\|C\|_F^2 = {\|\Psi^{-1} Y\|}_F^2 = \tr(Y^T \Psi^{-2} Y).
\end{equation}

In order to keep the Lipschitz constant $L=\sqrt{N} L_\phi {\|C\|}_F$ of the interpolator small in the learnt embedding, we need to keep both the Lipschitz constant $L_\phi$ of the Gaussian kernel and the norm $\|C\|_F$ of the coefficient matrix small. Using the expression of $\|C\|_F^2$ in \eqref{eq:Cmat_normsq} and recalling that $L_\phi =\sqrt{2} e^{-\half} \sigma^{-1} $, we thus propose to minimize the following objective for controlling the interpolator regularity
\begin{equation}
\label{eq:lipschitz_obj}
\min_{Y, \sigma} \ \tr(Y^T \Psi^{-2} Y) +  \frac{\mu}{\sigma^2}
\end{equation}
where $\mu$ is a weight parameter. The objective is chosen proportionally to the squares of  the terms $\| C \|_F$ and $L_\phi$ instead of themselves, due to the convenience of the analytical expression obtained for $\| C \|_F^2$  in  \eqref{eq:Cmat_normsq}.\\

\textit{Distance between neighboring points from the same class.}
Recall from Theorem \ref{thm:error_genf_nnclass} that the condition \eqref{eq:cond_sep_genf_nn} required for good classification performance enforces the term $ \Demb_{\delta}$ to be sufficiently small, where $\Demb_\delta$ is an upper bound on the distance between the embeddings of nearby samples; i.e., $\| y_i - y_j \| < \Demb_\delta$ whenever  $\| x_i - x_j  \| \leqslant 2 \delta$. It is not easy to study the distance $\| y_i - y_j \|$ in relation with the ambient space distance $ \|x_i - x_j \| $ for each pair of samples $x_i$, $x_j$. Nevertheless, we adopt a constructive solution here and relax this problem to the minimization of the distance between the embeddings of nearby points from the same class. The total distance between the embeddings of neighboring points from the same class, weighted by the edge weights, is given by 
\[
 \sum_{x_i, x_j: \ C(x_i) = C(x_j)} {\|y_i - y_j\|}^2 w_{ij} = \tr(Y^TL_wY).
\]
Here $L_w=D_w - W_w$ is the within-class Laplacian matrix associated with the within-class weight matrix $W_w$, where $D_w$ is the diagonal degree matrix whose entries are given by $(D_w)_{ii}=\sum_{j} (W_w)_{ij}$. The within-class weight matrix $W_w$ contains the weights $w_{ij}$ of the edges between each pair of neighboring samples $\quad x_i \sim x_j$ from the same class. A common choice for assigning the edge weights is the Gaussian kernel, in which case the matrix $W_w$ is of the form
\begin{align*}
(W_w)_{ij}=\begin{cases}
e^{-\frac{\| x_i - x_j \|^2}{\beta}}, &\text{ if } C(x_i) = C(x_j), \quad x_i \sim x_j\\
0, &\text{ otherwise.}
\end{cases} 
\end{align*}
Then, the objective
\begin{equation}
\label{eq:D_2delta_obj}
\min_Y \ \tr(Y^TL_wY)
\end{equation}
used in several previous works as discussed in Section \ref{sec:related_work} is an appropriate choice for our purpose.\\
 
\textit{Separation between samples from different classes.} The last entity to be examined in view of the condition \eqref{eq:cond_sep_genf_nn} is the separation margin $\mar$. In order to satisfy the condition \eqref{eq:cond_sep_genf_nn}, the separation between the samples from different classes must be sufficiently high. Although the margin $\mar$ stands for a lower bound for the distance $\| y_i - y_j \|$ between any pair of samples from different classes in Theorem \ref{thm:error_genf_nnclass}, the examination of the minimum value of $\| y_i - y_j \|$ for all pairs of samples is a relatively hard problem. We propose to relax this and evaluate the total distance between the embeddings of different-class samples in this study. Hence, in order to increase the separation margin $\mar$, we propose to maximize
\[
\sum_{C(x_i) \neq C(x_j)} {\|y_i - y_j\|}^2 = \tr(Y^TL_bY)
\]
where $W_b$ is a between-class weight matrix given by
\begin{align*}
(W_b)_{ij}=\begin{cases}
1 &\text{ if } C(x_i) \neq C(x_j) \\
0 &\text{ if } C(x_i) = C(x_j), 
\end{cases} 
\end{align*}
the diagonal between-class degree matrix is defined as $(D_b)_{ii}=\sum_{j} (W_b)_{ij}$, and $L_b = D_b - W_b$ is the corresponding between-class Laplacian matrix. Thus, the maximization of the separation margin is represented by the objective function
\begin{equation}
\label{eq:gamma_obj}
\max_Y \tr(Y^TL_bY).
\end{equation}\\
%

%
%
%
%
%
%
%
%
%
%
%
%
%
%
%
%

\textit{Overall optimization problem.} Now, bringing together the objective functions presented in \eqref{eq:lipschitz_obj}, \eqref{eq:D_2delta_obj}, and \eqref{eq:gamma_obj}, we propose to solve the following optimization problem in order to learn an embedding $Y$ together with its corresponding interpolator:
\begin{equation}
\label{eq:overall_obj}
\min_{Y, \sigma} 
\tr(Y^T L_w Y) - \mu_1 \tr(Y^T L_b Y) + \mu_2  \tr(Y^T \Psi^{-2} Y) + \frac{\mu_3}{\sigma^2}, 
\text{ s.t. } Y^T Y = I
\end{equation}
Here $\mu_1$, $\mu_2$, and $\mu_3$ are positive weights that balance the different terms in the objective function, and the normalization condition $Y^T Y = I$ is imposed in order to prevent solutions with arbitrarily small embedding coordinates that might trivially minimize the objective.

\subsection{Proposed manifold learning algorithm}
\label{ssec:proposed_method}

The proposed objective function \eqref{eq:overall_obj} can be made convex with respect to $Y$ if the weight parameters $\mu_1$ and $\mu_2$ are suitably chosen; however, it is not jointly convex with respect to both optimization variables $Y$ and $\sigma$. We thus  propose to minimize  \eqref{eq:overall_obj} with an alternating iterative optimization algorithm. In each iteration, we first fix the scale parameter $\sigma$ and optimize the embedding coordinates $Y$, which is then followed by fixing $Y$ and optimizing $\sigma$. 

\textit{Optimization of $Y$.} When the scale parameter $\sigma$ is fixed, the minimization of the objective \eqref{eq:overall_obj} is equivalent to the following optimization problem
\begin{equation}
\label{eq:opt_Y}
\begin{split}
Y^* &= \arg \min_Y \ \ \tr(Y^T L_w Y) - \mu_1 \tr(Y^T L_b Y) + \mu_2  \tr(Y^T \Psi^{-2} Y) \text{ s.t. }
Y^T Y = I \\
&= \arg \min_Y \ \  \tr \left(  Y^T (L_w - \mu_1 L_b + \mu_2 \Psi^{-2}  ) Y \right) \text{ s.t. }
Y^T Y = I.\\
\end{split}
\end{equation}
The solution to this problem is given by the $N \times d$ matrix $Y^*$ whose $k$-th column consists of the eigenvector of the matrix 
\begin{equation}
\label{eq:defn_A_mat}
A=L_w - \mu_1 L_b + \mu_2 \Psi^{-2} 
\end{equation}
that corresponds to its $k$-th smallest eigenvalue, for $k=1, \dots, d$. 

\textit{Optimization of $\sigma$.} Note that the dependence of the objective function  \eqref{eq:overall_obj} on the scale parameter $\sigma$ is through its third term $\mu_2  \tr(Y^T \Psi^{-2} Y)$ and fourth term $\mu_3/{\sigma^2}$. Hence, when the embedding $Y$ is fixed, the optimization of the objective is reduced to the problem
\begin{equation}
\label{eq:opt_sigma}
\sigma^* = \arg \min_{\sigma}  \  \ 
\mu_2 \tr(Y^T \Psi^{-2} Y) + \frac{\mu_3}{\sigma^2}.
\end{equation}
The objective in \eqref{eq:opt_sigma} is not a convex function of $\sigma$ in general. Nevertheless, a useful observation is the following: As the entries of the matrix $\Psi$ consist of the RBF kernel terms
\[
\phi(\|x_i-x_j\|)  = \exp \left(-  \frac{\|x_i-x_j\|^2}{\sigma ^2} \right)
\]
the matrix $\Psi$ and its inverse $\Psi^{-1}$ have poor conditioning when $\sigma$ takes arbitrarily large values. Hence, the first term $\tr(Y^T \Psi^{-2} Y)$ in \eqref{eq:opt_sigma} increases with increasing large values of $\sigma$. On the other hand, the term $\sigma^{-2}$ approaches infinity as $\sigma$ approaches 0. These observations imply that that there exists a positive kernel scale $\sigma^*>0$ that minimizes the objective \eqref{eq:opt_sigma}. As the problem \eqref{eq:opt_sigma} requires the optimization of a single parameter $\sigma$, an optimal value $\sigma^*$ can be computed easily. In practice, we find $\sigma^*$ via an exhaustive search procedure, by computing the objective \eqref{eq:opt_sigma} over a sufficiently dense sampling of the $\sigma$ values within a suitably chosen interval $[\sigma_{\min}, \sigma_{\max}]$ of typical scale parameters. The optimal parameter $\sigma^*$ is then taken as the $\sigma$ value at which the objective is minimum.

\begin{algorithm}[t]
\caption{Nonlinear Supervised Smooth Embedding (NSSE)}
\begin{algorithmic}[1]
\STATE
\textbf{Input:} \\
$X=\{x_i\}_{i=1}^N \subset \R^n$: Training samples with known class labels\\
$d$: Embedding dimension\\
$\mu_1, \mu_2, \mu_3$: Weight parameters

\STATE
\label{alg1:init}
\textbf{Initialization:} Set kernel scale $\sigma^*$ to a typical positive value.  

\REPEAT
\label{alg1:main_loop_begin}

\STATE 
\label{alg:opt_Y}
Fix $\sigma=\sigma^*$ and optimize $Y$ by solving

$Y^* = \arg \min_Y \tr \left(Y^T  ( L_w-\mu_1 L_b + \mu_2 \Psi^{-2} ) Y \right) \text{ s.t. } Y^T Y = I $
%

\STATE
\label{alg:opt_sigma}
Fix $Y = Y^*$ and optimize $\sigma$ by solving

$\sigma^* = \arg \min_{\sigma} \ \mu_2 \tr(Y^T \Psi^{-2} Y)  +  \mu_3 \sigma^{-2}  $

\UNTIL{Objective function in \eqref{eq:overall_obj} is stabilized}
\label{alg1:main_loop_end}


\STATE
\label{alg1:finalization}
Compute interpolator coefficients as $C= \Psi^{-1} Y$.


\STATE
\textbf{Output}:\\
$Y=\{y_i\}_{i=1}^N\subset \R^d$: Embedding of training samples\\
$f: \R^n \rightarrow \R^d$: Interpolation function  
\end{algorithmic}
\label{alg:Optimization_of_Y_and_sigma}
\end{algorithm}

These steps for the alternating optimization of $Y$ and $\sigma$ are applied successively until the stabilization of the objective function. Note that if $\mu_1$ is chosen sufficiently small to make the matrix $A$ in \eqref{eq:defn_A_mat} positive semi definite, the overall objective function \eqref{eq:overall_obj} is positive. In this case, since both of the alternating optimization steps in \eqref{eq:opt_Y} and \eqref{eq:opt_sigma} bring updates that cannot  increase the objective function in each iteration, being bounded from below, the objective function is guaranteed to converge.

Once the embedding $Y$ of the training points and the kernel scale parameter $\sigma$ are computed in this way, the interpolation function $f$ is simply obtained as in \eqref{eq:rbf_interpolator} by computing the coefficients $c_i^k$ as $C=\Psi^{-1} Y$. We call the proposed method Nonlinear Supervised Smooth Embedding (NSSE) and give its description in Algorithm \ref{alg:Optimization_of_Y_and_sigma}.

\subsection{Complexity of the proposed algorithm}
\label{ssec:comp_analysis}

We now analyze the computational complexity of the proposed NSSE method.  The algorithm is composed of three main stages, which are the initialization stage (calculation of the $L_w$ and $L_b$ matrices), the main loop between steps \ref{alg1:main_loop_begin} and \ref{alg1:main_loop_end} of Algorithm \ref{alg:Optimization_of_Y_and_sigma}, and the finalization stage in step \ref{alg1:finalization}. 

 
In the initialization step, the complexity of the computation of $L_w$ and $L_b$ is mainly determined by the complexity of computing the within-class and between-class weight matrices $W_w$ and $W_b$, which is of $O(n N^2)$. 

We next consider the main loop of the algorithm. The matrix $\Psi$ in step \ref{alg:opt_Y} can be calculated with complexity $O(n N^2)$ and it is inverted with complexity $O(N^3)$ to obtain $\Psi^{-1}$. As a result, the computation of $\Psi^{-2}$ is of complexity $O(n N^2) + O(N^3)$. In order to find $Y^*$ in step \ref{alg:opt_Y}, the eigenvectors of $L_w - \mu_1 L_b + \mu_2 \Psi^{-2}$ should be found, which is of complexity $O(N^3)$. Consequently, the total complexity of step \ref{alg:opt_Y} is $O(n N^2) + O(N^3)$. In step \ref{alg:opt_sigma}, the expression $\mu_2 \tr(Y^T \Psi^{-2} Y)  +  \mu_3 \sigma^{-2}$ must be computed repeatedly to find $\sigma^*$, which is of complexity $O(N^3)$. Hence, the complexity of the main loop of the algorithm is found as  $O(n N^2) + O(N^3)$.

In step \ref{alg1:finalization}, the complexity of the calculation of $\Psi^{-1}$ is $O(N^3)$, and the matrix product $\Psi^{-1} Y$ is of complexity $O(d N^2)$. We may assume $d \ll N$, which then gives the complexity of step \ref{alg1:finalization} as of $O(N^3)$. Combining this with the previous stages, the overall complexity of the algorithm is found as $O(n N^2) + O(N^3)$.




\section{Experimental Results}
\label{sec:exp_results}

In this section, we evaluate the performance of the proposed NSSE method on six real data sets. We first describe the data sets, then study the iterative optimization procedure employed in the proposed method, and then compare the performance of NSSE with that of other supervised manifold learning algorithms and traditional classifiers. 

\subsection{Data sets and experimentation setting}
\label{ssec:dataset}

We experiment on the data sets listed below. Some sample images from one class of each data set are presented in Figure \ref{fig:smpl_images}.

\textit{Yale Face Database.} The data set consists of 2242 greyscale face images of 38 different subjects, where each subject has 59 images \cite{GeBeKr01}. All images are taken from a single viewpoint with variations in the lighting angles and lighting rates. 




\textit{COIL-20 Database.} The Columbia Object Image Library database consists of 1440 grayscale images of 20 different objects, where each object has 72 images captured by rotation increments of 5 degrees \cite{NeneNM96}. 




\textit{ORL Database.} The database consists of a total of 400 images, with 10 images of each one of the 40 subjects taken in an upright, frontal position \cite{SamariaH94}. The images contain variations in the the lighting, facial expressions and facial details such as glasses. 



\textit{FEI Database.} The FEI database is a Brazilian face database containing a total of 2800 images, with 14 images for each one of the 200 subjects taken in an upright frontal position with profile rotation of up to about 180 degrees and scale variation of about 10\% \cite{ThomazG10}. We experiment on 50 classes from this database. 


 
\textit{ROBOTICS-CSIE Database.} The database contains a total of 3330 grayscale face images of 90 subjects, with 37 images for each subject captured under rotation increments of 5 degrees \cite{CSIEdata}. We experiment on 40 classes from this database. 




\textit{MIT-CBCL Database.} The database contains face images of 10 subjects \cite{MITCBCL}. We experiment on a total of 5240 images, with 524 images per subject captured under rotations of up to 30 degrees and varying illumination conditions. 



\begin{figure}[t]
\begin{center}
     \subfigure[Yale database]
       {\label{fig:yale_smpls}\includegraphics[height=0.8cm]{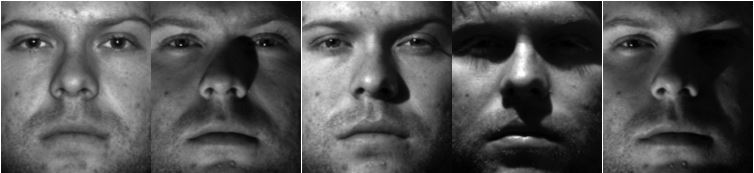}}
     \subfigure[COIL-20 database]
       {\label{fig:coil20_smpls}\includegraphics[height=0.8cm]{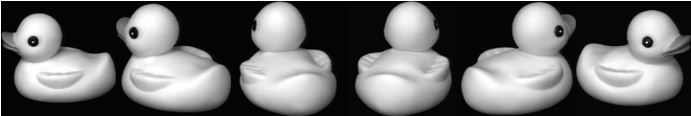}}
      \subfigure[ORL database]
       {\label{fig:orl_smpls}\includegraphics[height=0.8cm]{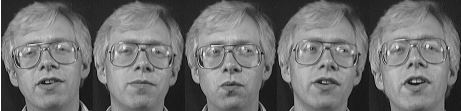}}
     \subfigure[FEI database]
       {\label{fig:fei_smpls}\includegraphics[height=0.8cm]{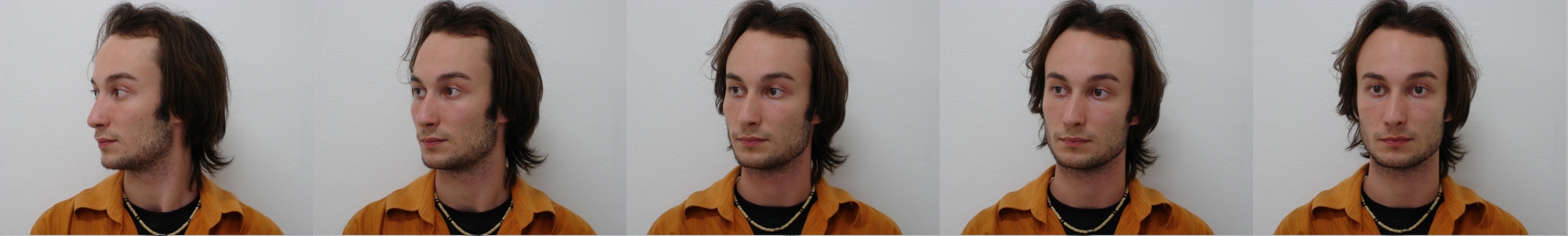}}
     \subfigure[ROBOTICS-CSIE database]
       {\label{fig:csie_smpls}\includegraphics[height=0.8cm]{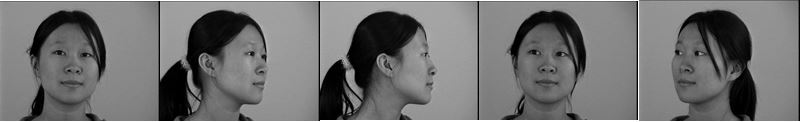}}
     \subfigure[MIT-CBCL database]
       {\label{fig:mitcbcl_smpls}\includegraphics[height=0.8cm]{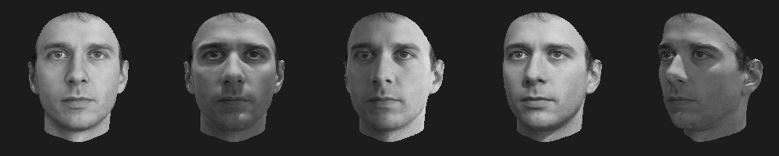}}
 
       \end{center}
 \caption{Sample images from one class of the used databases}
 \label{fig:smpl_images}
\end{figure}

We experiment on greyscale versions of the images resized to around $25 \times 25$ pixels. All experiments are conducted in a supervised setup, by randomly separating the images into a training set and a test set in each repetition of the experiment. In all experiments, the proposed NSSE algorithm is evaluated in a setting where the training images are used to learn a continuous embedding into a low-dimensional domain. The test images are then  mapped to the domain of embedding via the learnt interpolator and their class labels are estimated via nearest neighbor classification in the low-dimensional domain. The graph edge weights are set with a Gaussian kernel. In all experiments, the weight parameters $\mu_1$, $\mu_2$, and $\mu_3$ of NSSE are set with cross-validation. The weight parameters are set sequentially, by first initializing them with some typical values and then optimizing one of them at a time via cross validation where the others are kept fixed. When optimizing one weight parameter, the training samples are divided randomly into two sets as the training set and the validation set, the algorithm is trained on the training set, and the classification error is measured on the validation set for different values of the weight parameter. We repeat this several times by randomly assigning the training and the validation set, and then finally select the parameter value that gives the smallest average classification error on the validation set.  In practice, we have observed that the typical ranges of appropriate $\mu_1$, $\mu_2$, and $\mu_3$ values do not usually vary dramatically between different data sets and setting these parameters to values within the intervals $\mu_1 \in [100,1000]$, $\mu_2 \in [0.0001, 0.001]$, and $\mu_3 \in [1,5]$ often yields satisfactory performance.


\subsection{Study of the iterative optimization procedure}

\begin{figure}[t]
\begin{center}
     \subfigure[]
       {\label{fig:obj_vs_iters_fei_high}\includegraphics[scale=0.29]{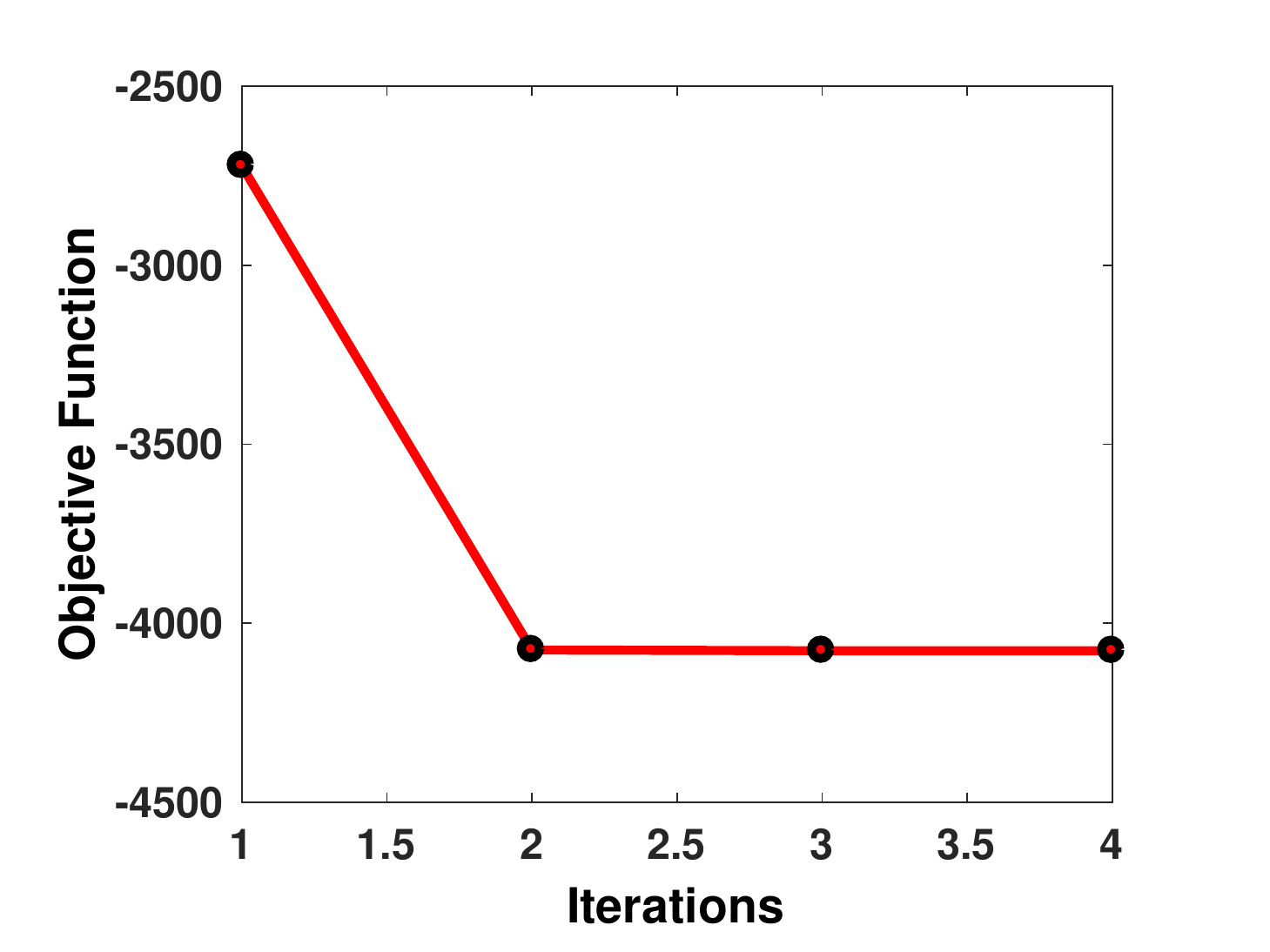}}
       \hspace{-0.6cm}
     \subfigure[]
       {\label{fig:msc_vs_iters_fei_high}\includegraphics[scale=0.29]{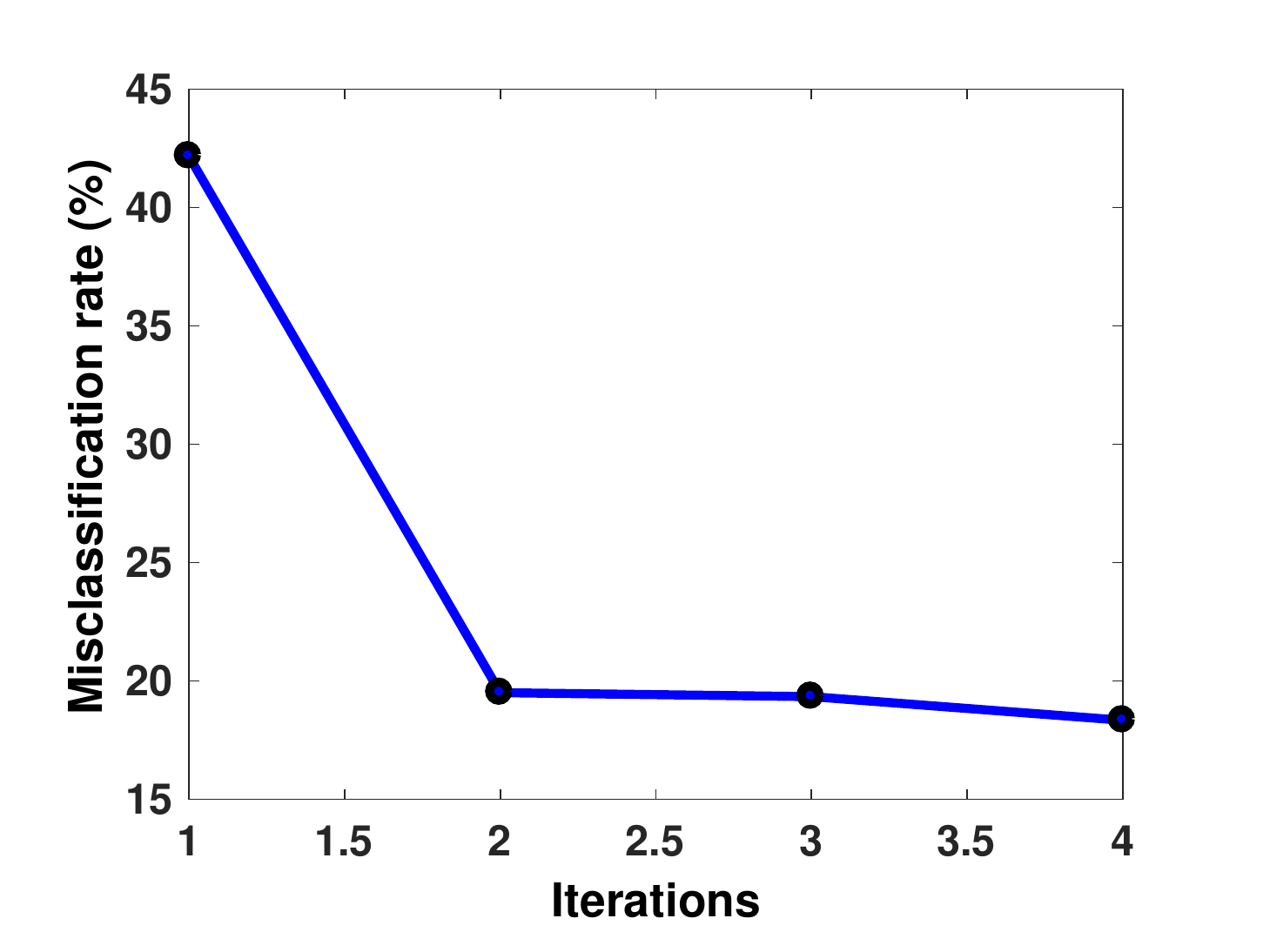}}
       \hspace{-0.6cm}
     \subfigure[]
       {\label{fig:sgm_vs_iters_fei_high}\includegraphics[scale=0.29]{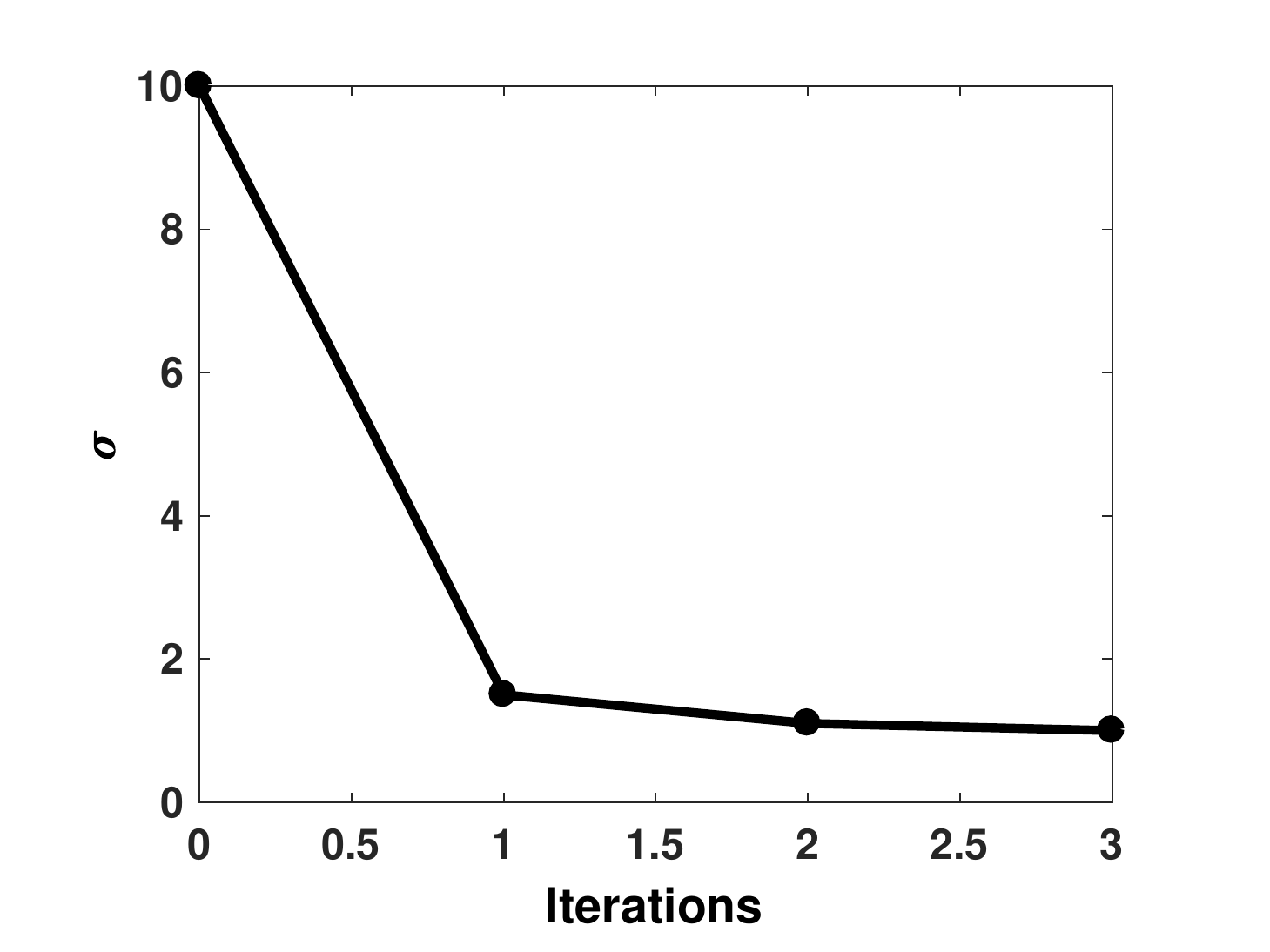}}
       \end{center}
 \caption{Algorithm performance throughout iteratations when initialized with a high RBF kernel scale. (a) Convergence of the objective function (b) Stabilization of the misclassification rate (c) Stabilization of the RBF kernel scale $\sigma$ }
 \label{fig:exp_alg_conv_fei_high}
\end{figure}

\begin{figure}[!h]
\begin{center}
     \subfigure[]
       {\label{fig:obj_vs_iters_fei_low}\includegraphics[scale=0.29]{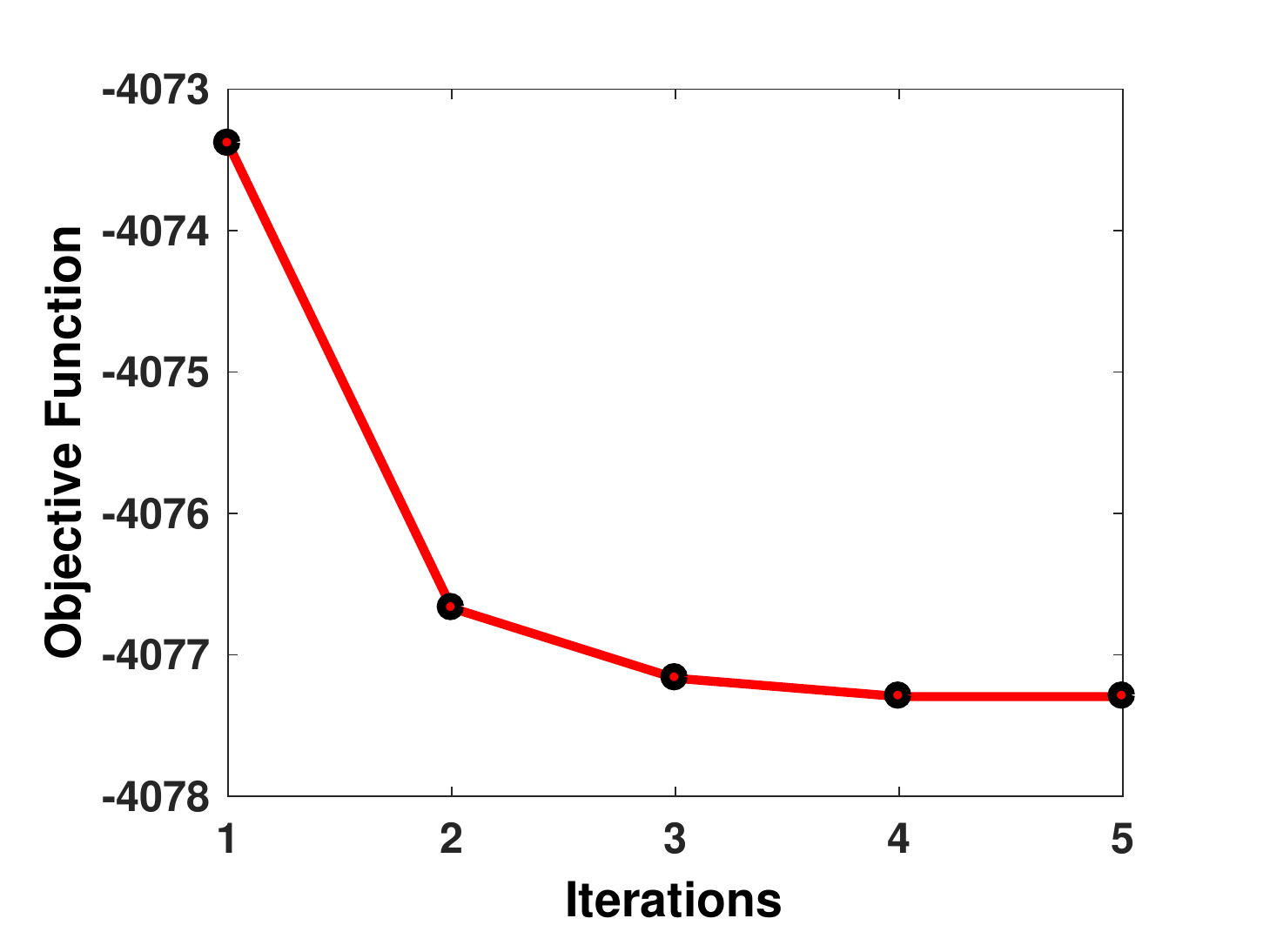}}
      \hspace{-0.6cm}
     \subfigure[]
       {\label{fig:msc_vs_iters_fei_low}\includegraphics[scale=0.29]{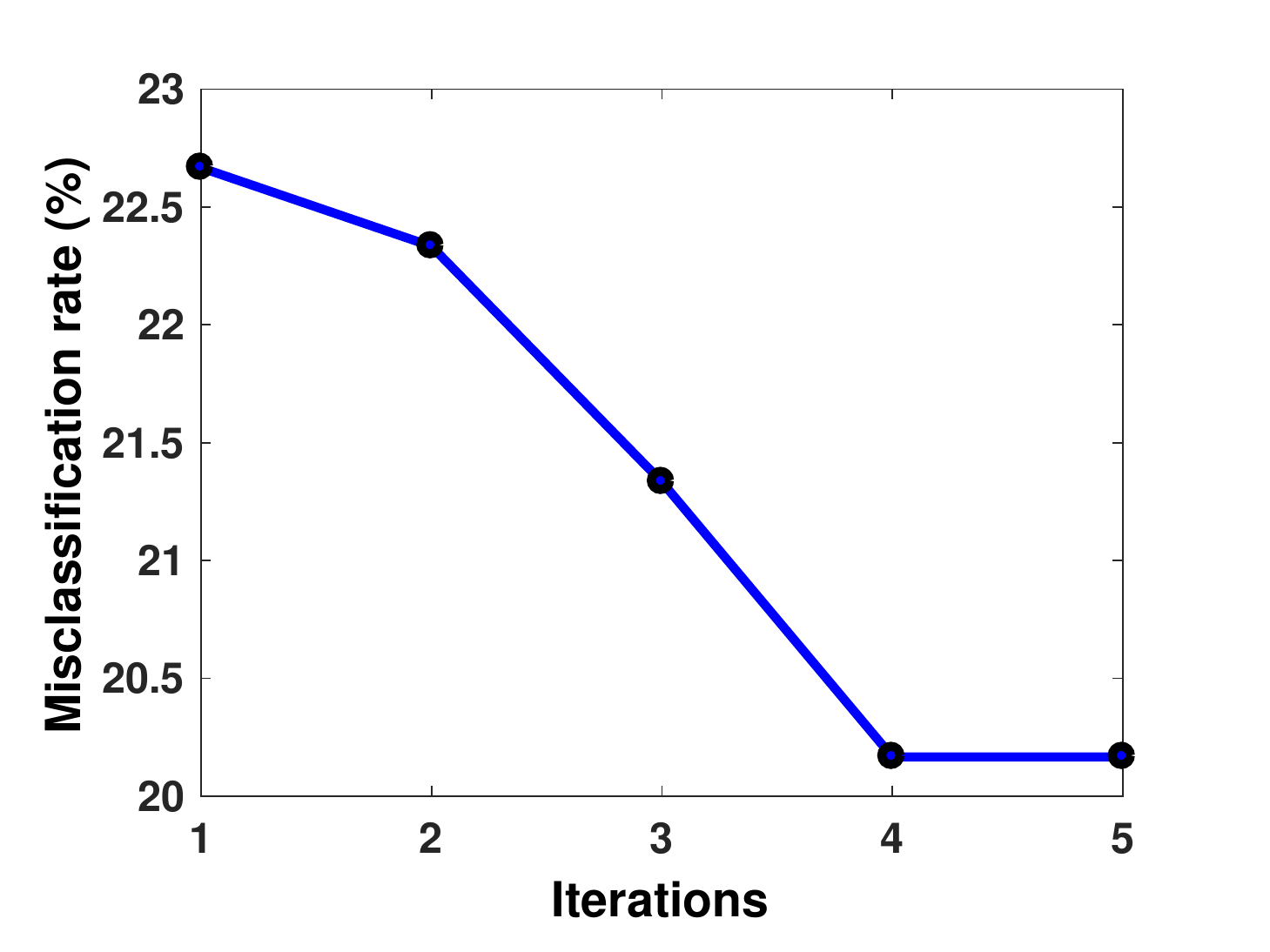}}
      \hspace{-0.6cm}
     \subfigure[]
       {\label{fig:sgm_vs_iters_fei_low}\includegraphics[scale=0.29]{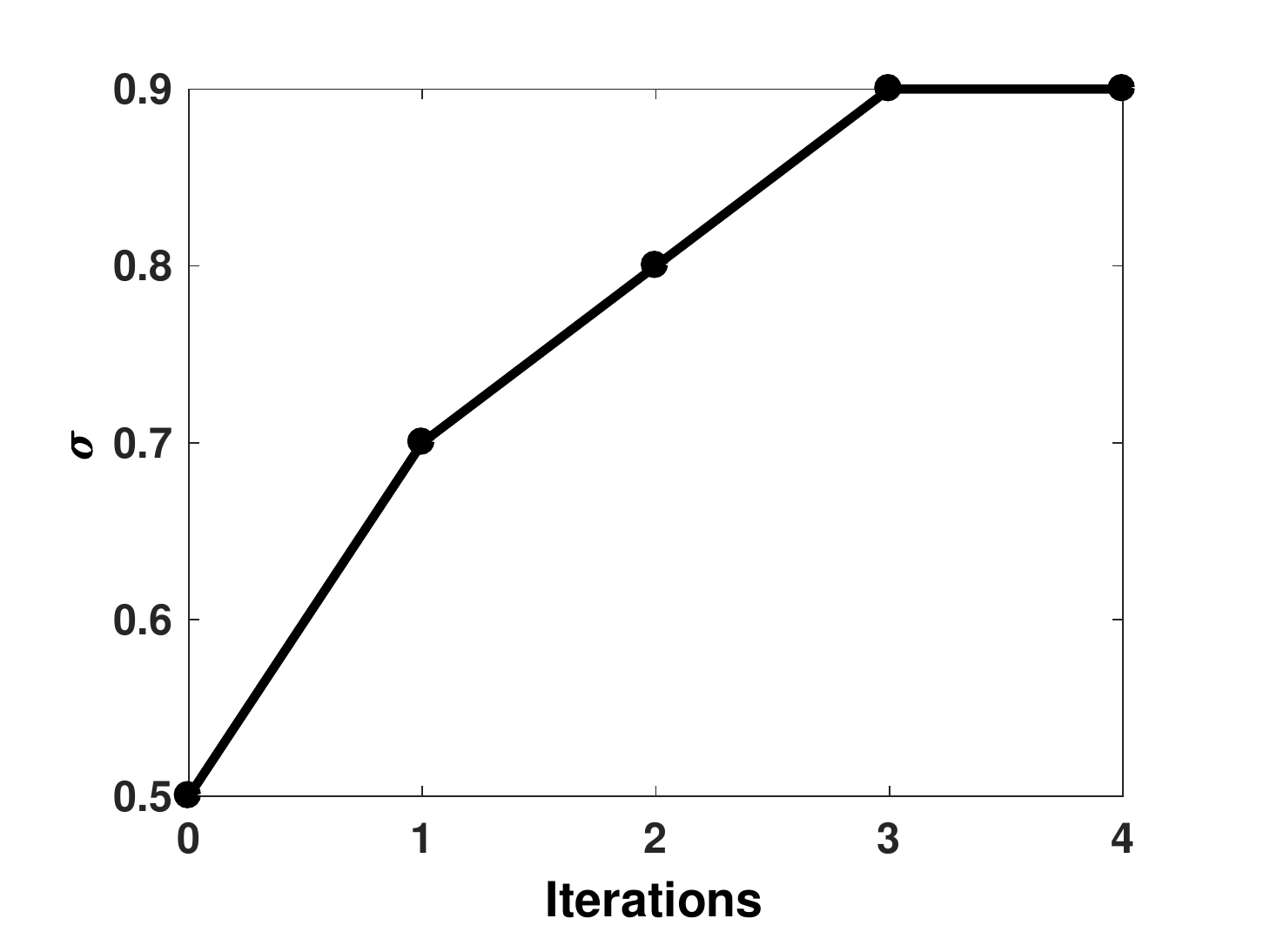}}
       \end{center}
 \caption{Algorithm performance throughout iterations when initialized with a low RBF kernel scale. (a) Convergence of the objective function (b) Stabilization of the misclassification rate (c) Stabilization of the RBF kernel scale $\sigma$ }
 \label{fig:exp_alg_conv_fei_low}
\end{figure}

In this first experiment, we study the iterative optimization procedure employed in the proposed method. As discussed in Section \ref{ssec:proposed_method}, the NSSE algorithm follows an alternating optimization scheme by minimizing the objective function in \eqref{eq:overall_obj} first with respect to the embedding $Y$ of the training samples, and then the scale parameter $\sigma$ of the RBF kernels. 

The results given in Figure \ref{fig:exp_alg_conv_fei_high} are obtained on the FEI face data set, where an embedding into a $d=10$ dimensional domain is computed using a total of $100$ training samples. Figure \ref{fig:obj_vs_iters_fei_high} shows the variation of the objective function in \eqref{eq:overall_obj} throughout the iterations. Although the proposed alternating optimization procedure is not theoretically guaranteed to find the global optimum of the objective, it is observed from the figure that the proposed scheme can effectively minimize the objective function, which converges in a small number of iterations. The misclassification rates of the test images in percentage are reported in Figure \ref{fig:msc_vs_iters_fei_high} obtained with the embeddings and interpolators computed in each iteration. The results show that the progressive update of the continuous embedding throughout the iterations improves the classification performance. The comparison of the plots in Figures \ref{fig:obj_vs_iters_fei_high} and \ref{fig:msc_vs_iters_fei_high} reveals that the variations of the objective function and the misclassification rate throughout the iterations are quite similar. This suggests that the choice of the objective function in \eqref{eq:overall_obj}, motivated by theoretical bounds, indeed matches the actual classification error. Figure \ref{fig:sgm_vs_iters_fei_high} shows the evolution of the RBF kernel scale parameter $\sigma$  throughout the iterations. The RBF kernel scale  $\sigma$ is deliberately initialized with a too high value in this experiment in order to study the effect of the initial conditions on the algorithm performance. Despite the initialization of $\sigma$ with a too large value, the iterative minimization of the objective gradually pulls the kernel scale towards a favorable value that improves the classification performance.

The same experiment is repeated in Figure \ref{fig:exp_alg_conv_fei_low}, by initializing the RBF kernel scale this time with a small value. It is observed that the RBF scale $\sigma$ is effectively optimized throughout the iterations towards a larger value, which gradually decreases the objective function and improves the classification accuracy. These results suggest that the algorithm performance is not affected much by the initialization of the RBF kernel scale. We have obtained similar results on the other data sets and under different choices of the parameters such as the number of training samples, which we skip here for brevity.


\subsection{Variation of the classification performance with the embedding dimension}
\label{ssec:error_vs_dimension}

We now study the classification performance of the proposed algorithm in relation with the dimension $d$ of the embedding. The proposed NSSE method is compared to some other dimensionality reduction algorithms listed below.

\begin{itemize}

\item  The Supervised Laplacian Eigenmaps (SUPLAP) method proposed in \cite{Raducanu12} computes a nonlinear low-dimensional embedding of the training samples by minimizing the objective in \eqref{eq:supLap_formal}. We extend the embedding of the training samples given by the SUPLAP method to the whole space via an RBF interpolator of the same form as in NSSE.  We then embed the test samples into the low-dimensional domain with this interpolation function.

\item  The Local Fisher Discriminant Analysis (LFDA) method proposed in \cite{Sugiyama07} is a supervised manifold learning algorithm that computes a linear embedding by optimizing a Fisher-type cost with additional locality preservation objectives.

\item  The Local Discriminant Embedding method (LDE) \cite{LDE} is a manifold learning method that optimizes a similar objective as in the SUPLAP method; however, learns a linear projection.



\item  Linear Discriminant Analysis (LDA) is a classical dimensionality reduction technique that maximizes the between-class scatter while minimizing the within-class scatter.  



\end{itemize}

The dimensionality reduction methods are applied on training samples to compute a $d$-dimensional embedding, which is then used to classify test samples via nearest neighbor classification in the domain of embedding. The algorithms are evaluated for a range of $d$ values. The parameters of the other methods in comparison are adjusted to attain their best performance.

The variation of the misclassification rates of test samples in percentage with the dimension $d$ of the embedding is presented in Figures \ref{fig:error_vs_d_yale}, \ref{fig:error_vs_d_coil}, \ref{fig:error_vs_d_orl}, \ref{fig:error_vs_d_fei}, \ref{fig:error_vs_d_robotics}, and \ref{fig:error_vs_d_mitcbcl}, respectively for the Yale, COIL-20, ORL, FEI, ROBOTICS-CSIE and the MIT-CBCL databases. The number of training images used in the computation of the embeddings are indicated in the figure captions for each experiment, which are chosen proportionally to the total number of samples in the data set. The results are the average of 20 random realizations of the experiments with different training and test sets. Most of the tested methods are based on solving a generalized eigenvalue problem and the rank of the involved matrices may be different for each method depending on the number of training samples and the number of classes. Hence, the maximum possible dimension of the embedding may vary between different methods, as well as the best range of dimensions where the methods perform well. For this reason, the results on each data set are grouped into two figures with different $d$ ranges for better visual clarity.

\begin{figure}[t]
\begin{center}
     \subfigure[NSSE, SUPLAP and LDA ]
       {\label{fig:dvsE_tdn10_nsse_suplap_lda}\includegraphics[scale=0.34]{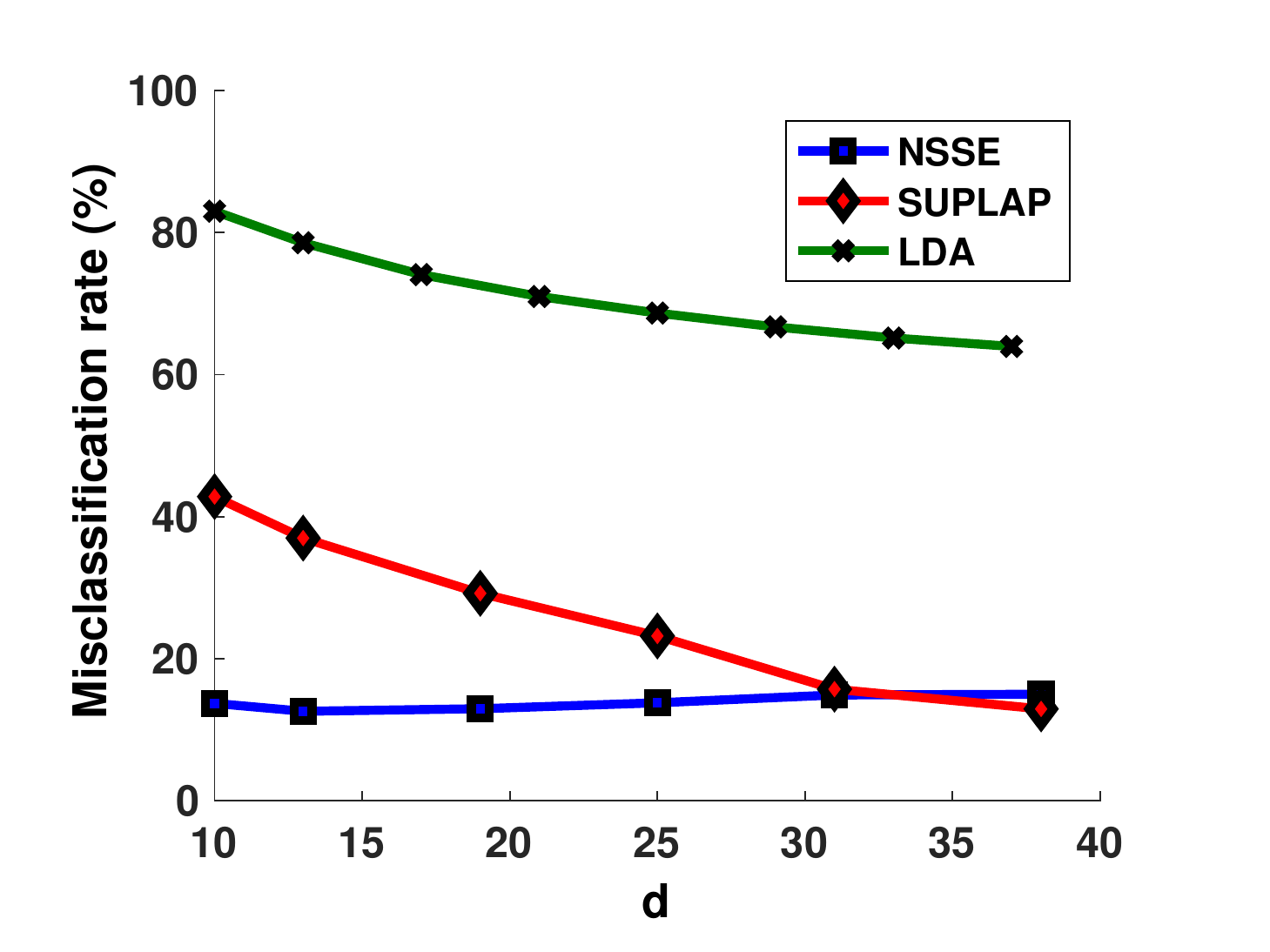}}
     \subfigure[LFDA and LDE ]
       {\label{fig:dvsE_tdn10_lfda_lde}\includegraphics[scale=0.34]{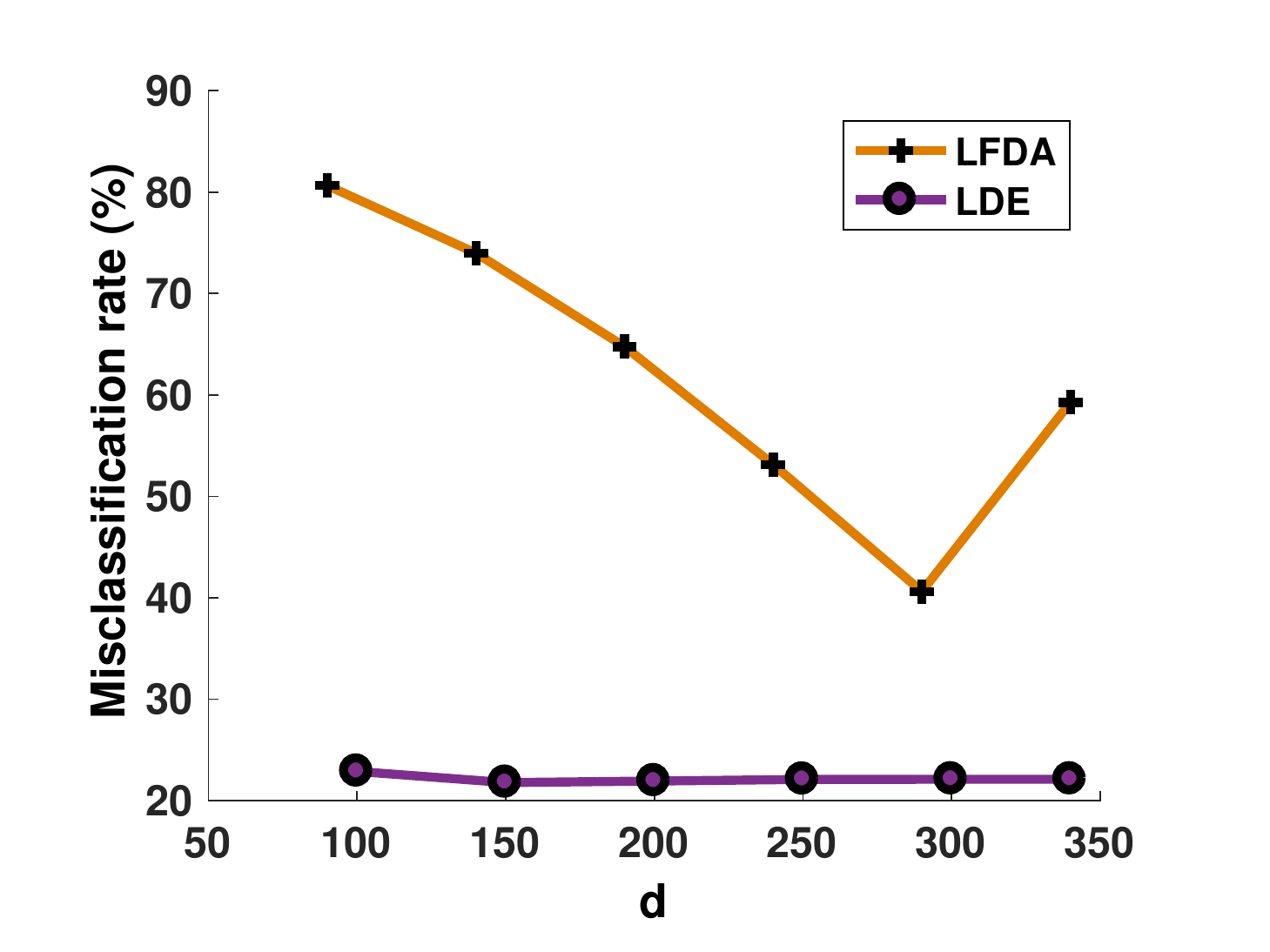}}
       \end{center}
        \vspace{-0.5cm}
 \caption{Variation of the misclassification rate with the embedding dimension in Yale data set, with 10 training samples per class}
 \label{fig:error_vs_d_yale}
\end{figure}

\begin{figure}[]
\begin{center}
     \subfigure[NSSE, SUPLAP and LDA]
       {\label{fig:dvsE_tdn10_nsse_suplap_lda}\includegraphics[scale=0.34]{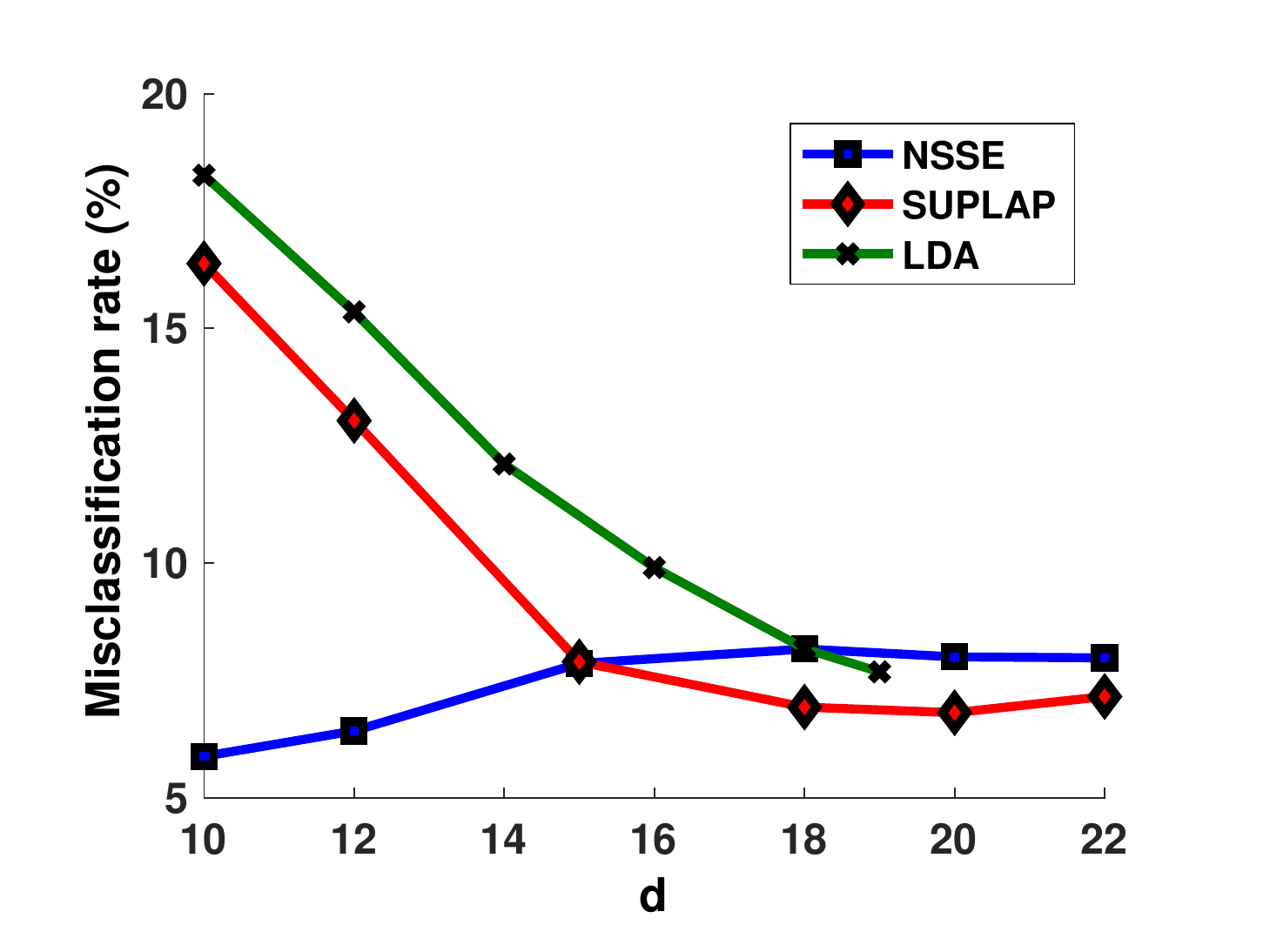}}
     \subfigure[LFDA and LDE]
       {\label{fig:dvsE_tdn10_lfda_lde}\includegraphics[scale=0.34]{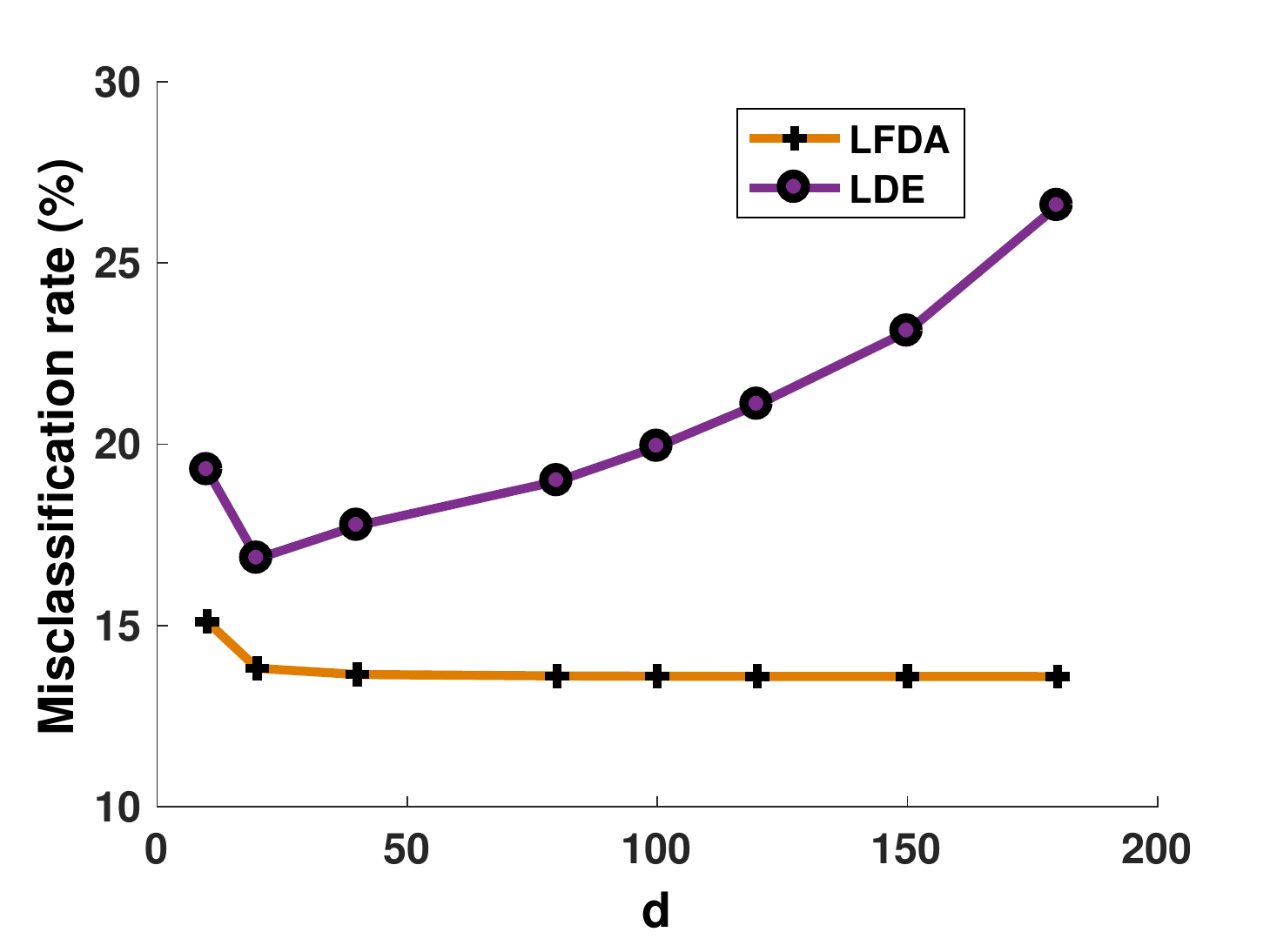}}
       \end{center}
       \vspace{-0.5cm}
 \caption{Variation of the misclassification rate with the embedding dimension in COIL-20 data set, with 10 training samples per class}
 \label{fig:error_vs_d_coil}
\end{figure}

\begin{figure}[]
\begin{center}
     \subfigure[NSSE, SUPLAP and LDA ]
       {\label{fig:dvsE_tdn2_all}\includegraphics[scale=0.34]{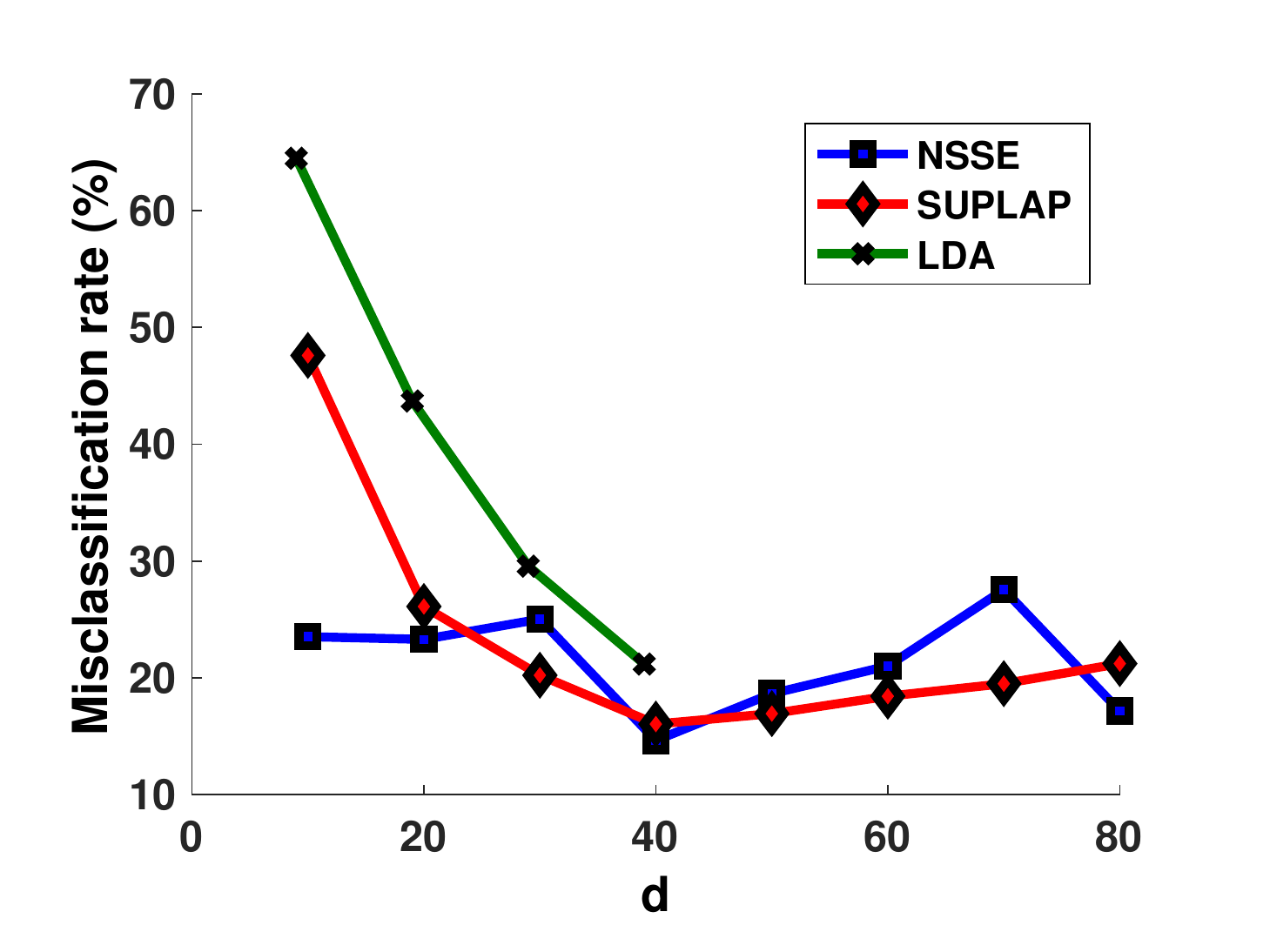}}
     \subfigure[LFDA and LDE ]
       {\label{fig:dvsE_tdn3_all}\includegraphics[scale=0.34]{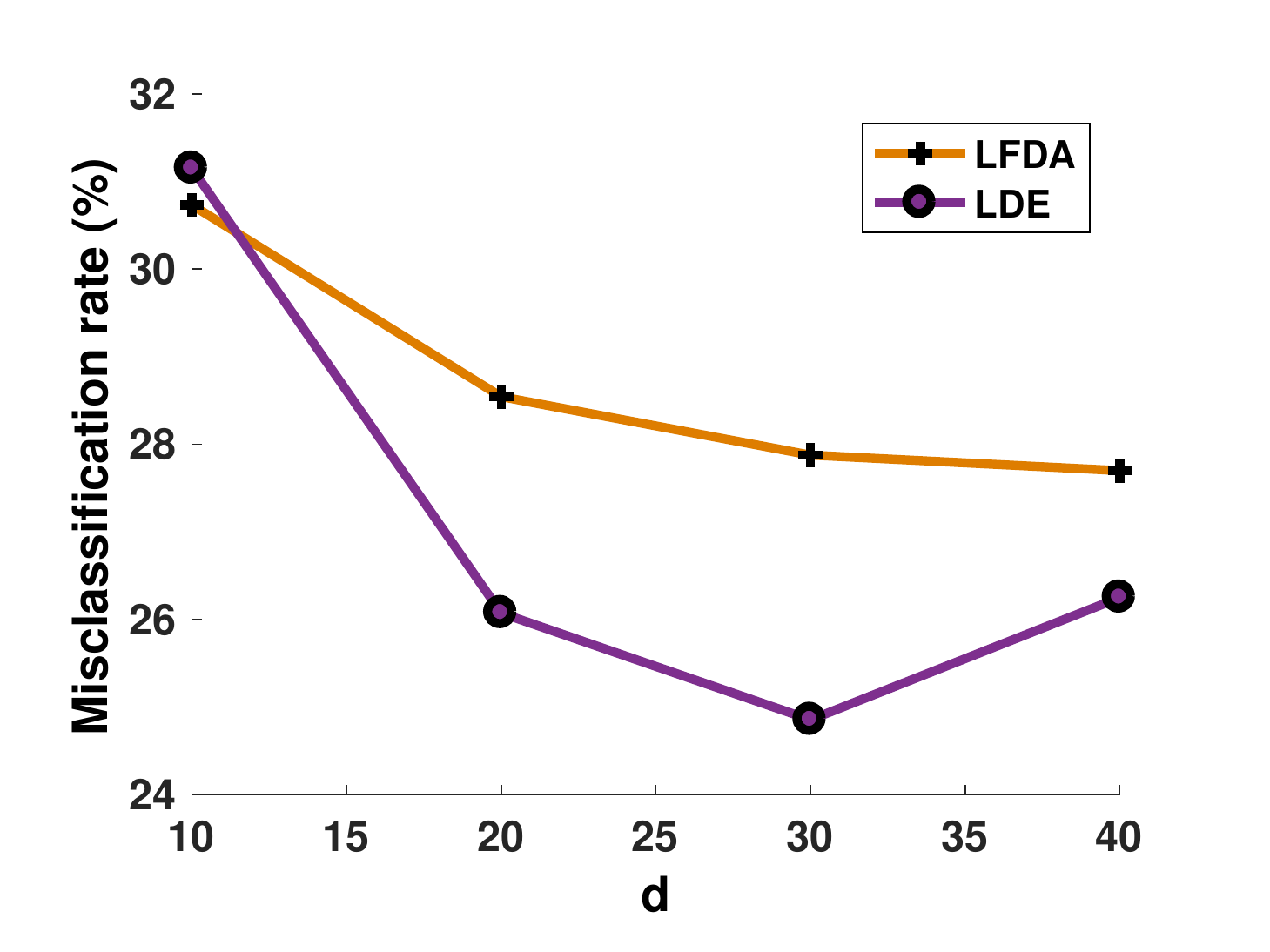}}
       \end{center}
            \vspace{-0.5cm}
 \caption{Variation of the misclassification rate with the embedding dimension in ORL data set, with 2 training samples per class}
 \label{fig:error_vs_d_orl}
\end{figure}

\begin{figure}[]
\begin{center}
     \subfigure[NSSE, SUPLAP and LDA]
       {\label{fig:dvsE_tdn2_nsse_suplap_lda}\includegraphics[scale=0.34]{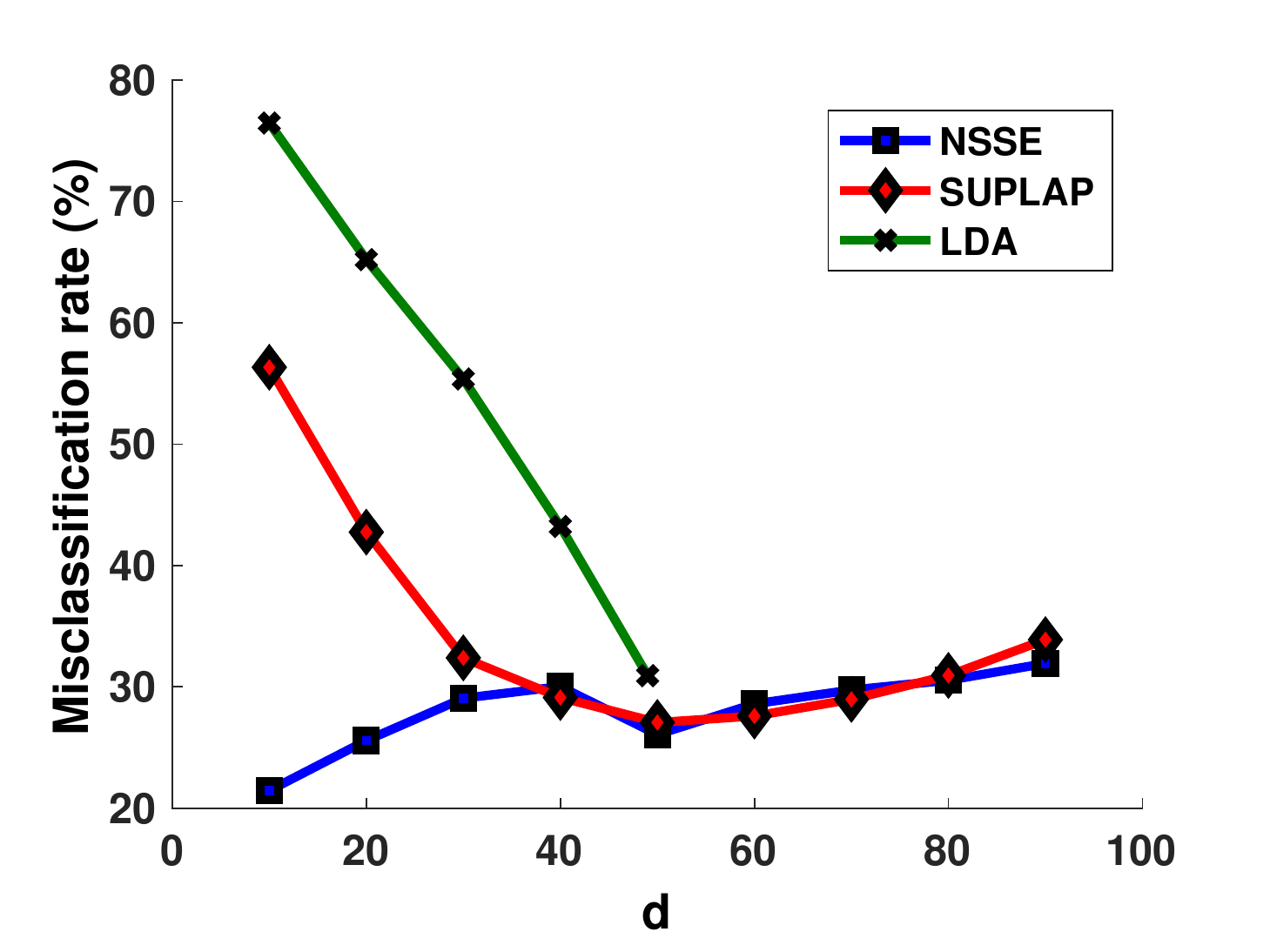}}
     \subfigure[LFDA and LDE]
       {\label{fig:dvsE_tdn2_lfda_lde}\includegraphics[scale=0.34]{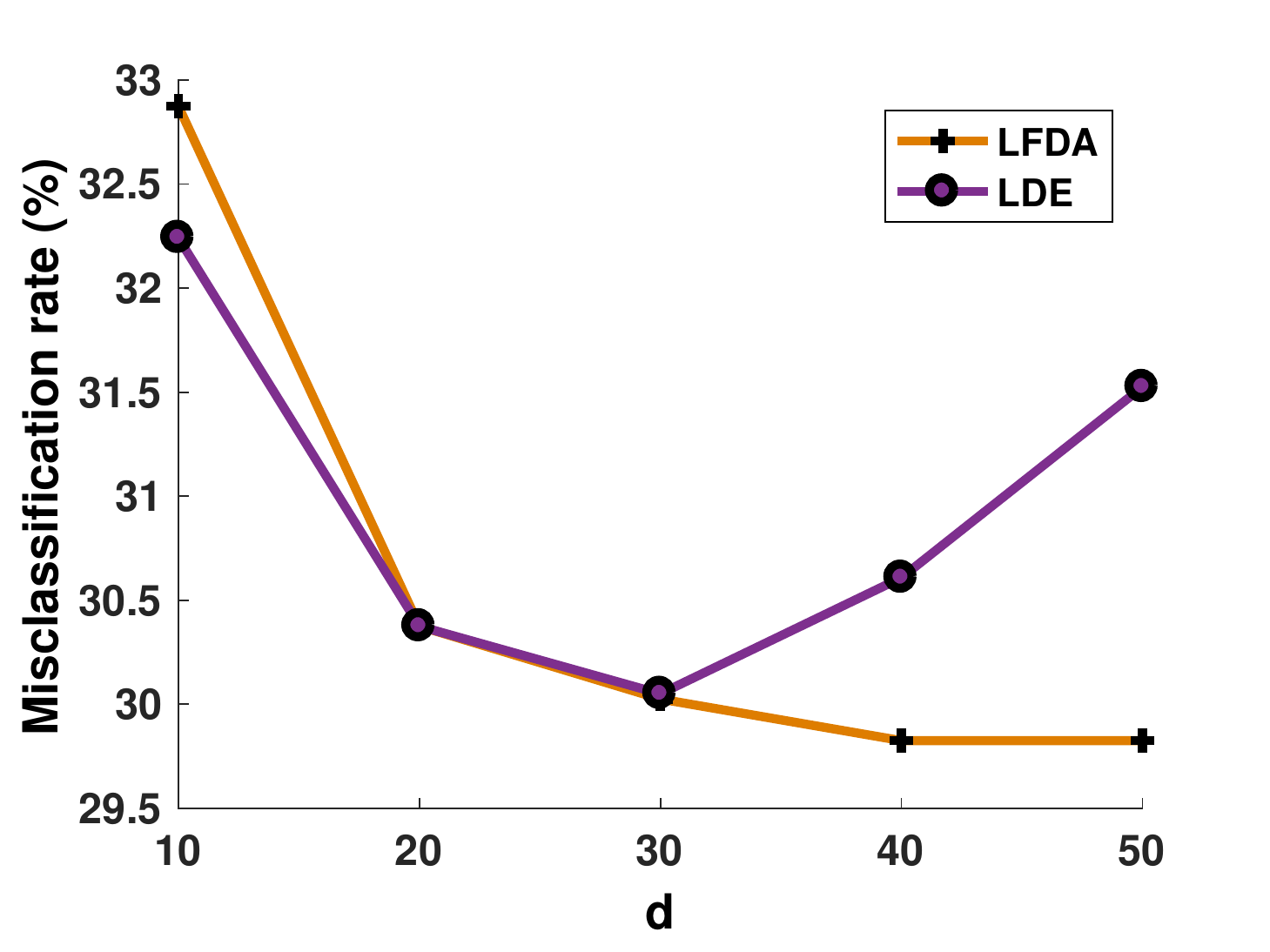}}
       \end{center}
       \vspace{-0.5cm}
 \caption{Variation of the misclassification rate with the embedding dimension in FEI data set, with 2 training samples per class}
 \label{fig:error_vs_d_fei}
\end{figure}

\begin{figure}[]
\begin{center}
     \subfigure[NSSE, SUPLAP and LDA]
       {\label{fig:dvsE_tdn7_nsse_suplap_lda}\includegraphics[scale=0.34]{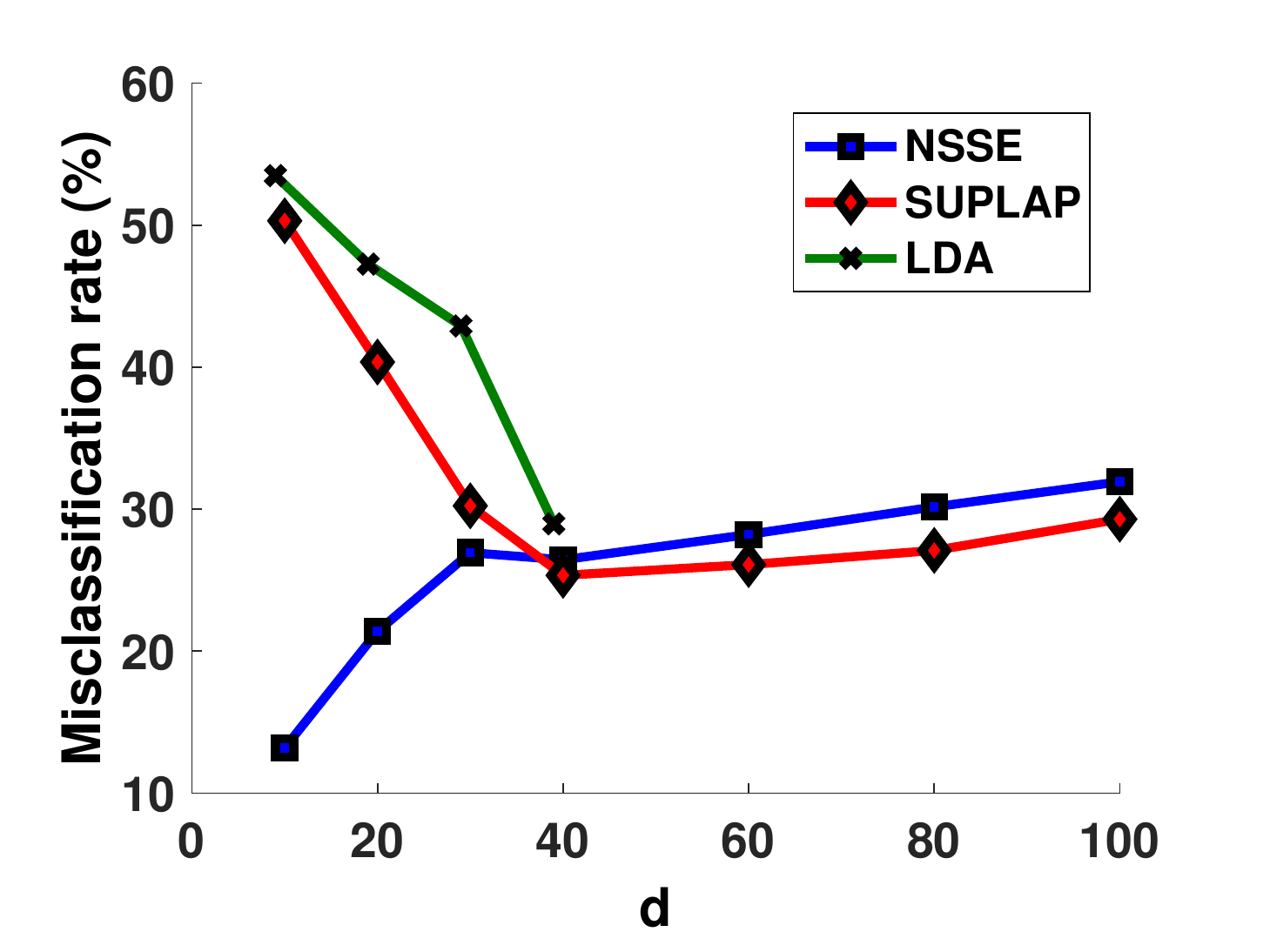}}
     \subfigure[LFDA and LDE]
       {\label{fig:dvsE_tdn7_lfda_lde}\includegraphics[scale=0.34]{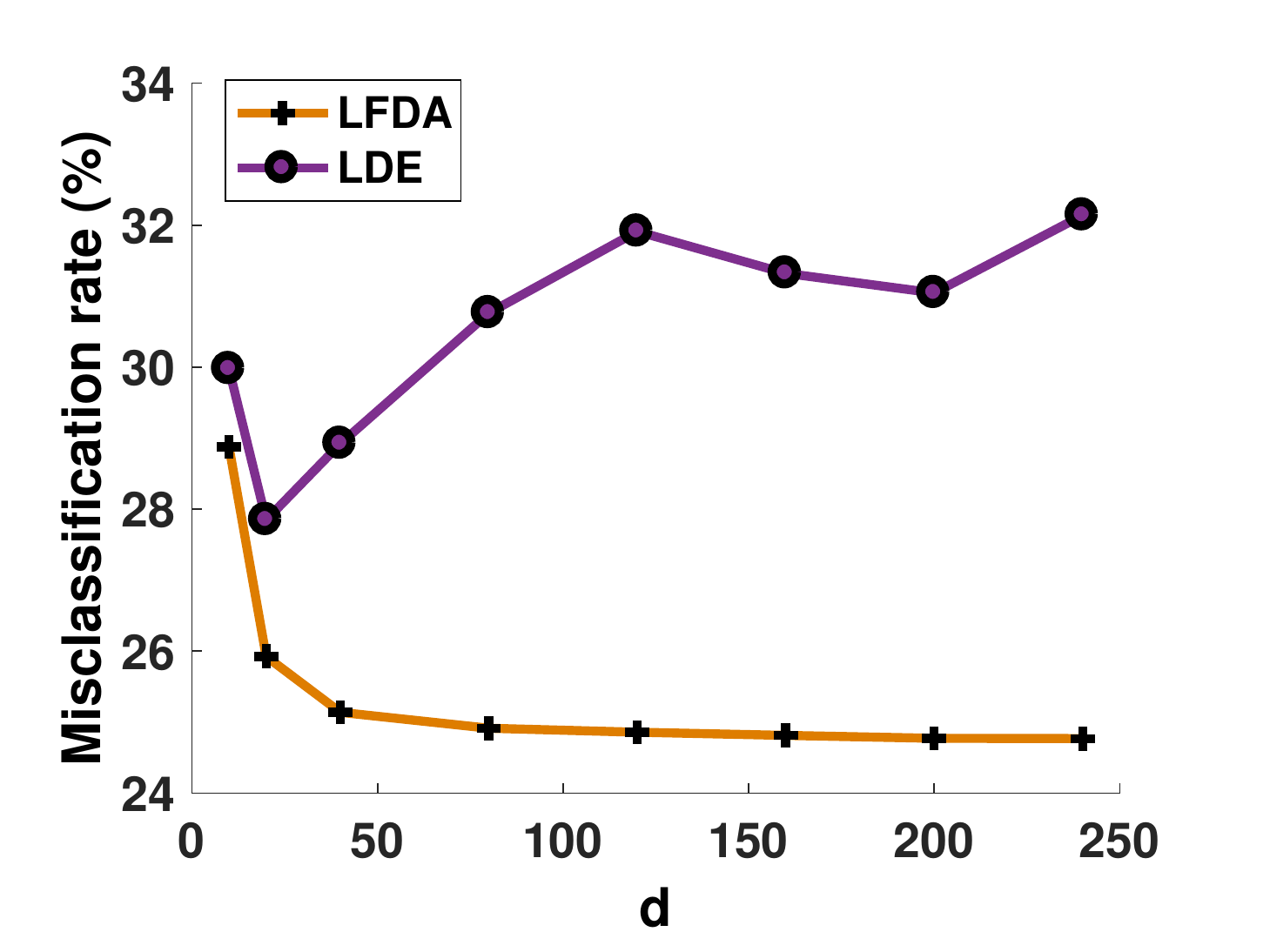}}
       \end{center}
        \vspace{-0.5cm}
 \caption{Variation of the misclassification rate with the embedding dimension in ROBOTICS-CSIE data set, with 7 training samples per class}
 \label{fig:error_vs_d_robotics}
\end{figure}

\begin{figure}[]
\begin{center}
     \subfigure[NSSE, SUPLAP and LDA]
       {\label{fig:dvsE_tdn10_nsse_suplap_lda}\includegraphics[scale=0.34]{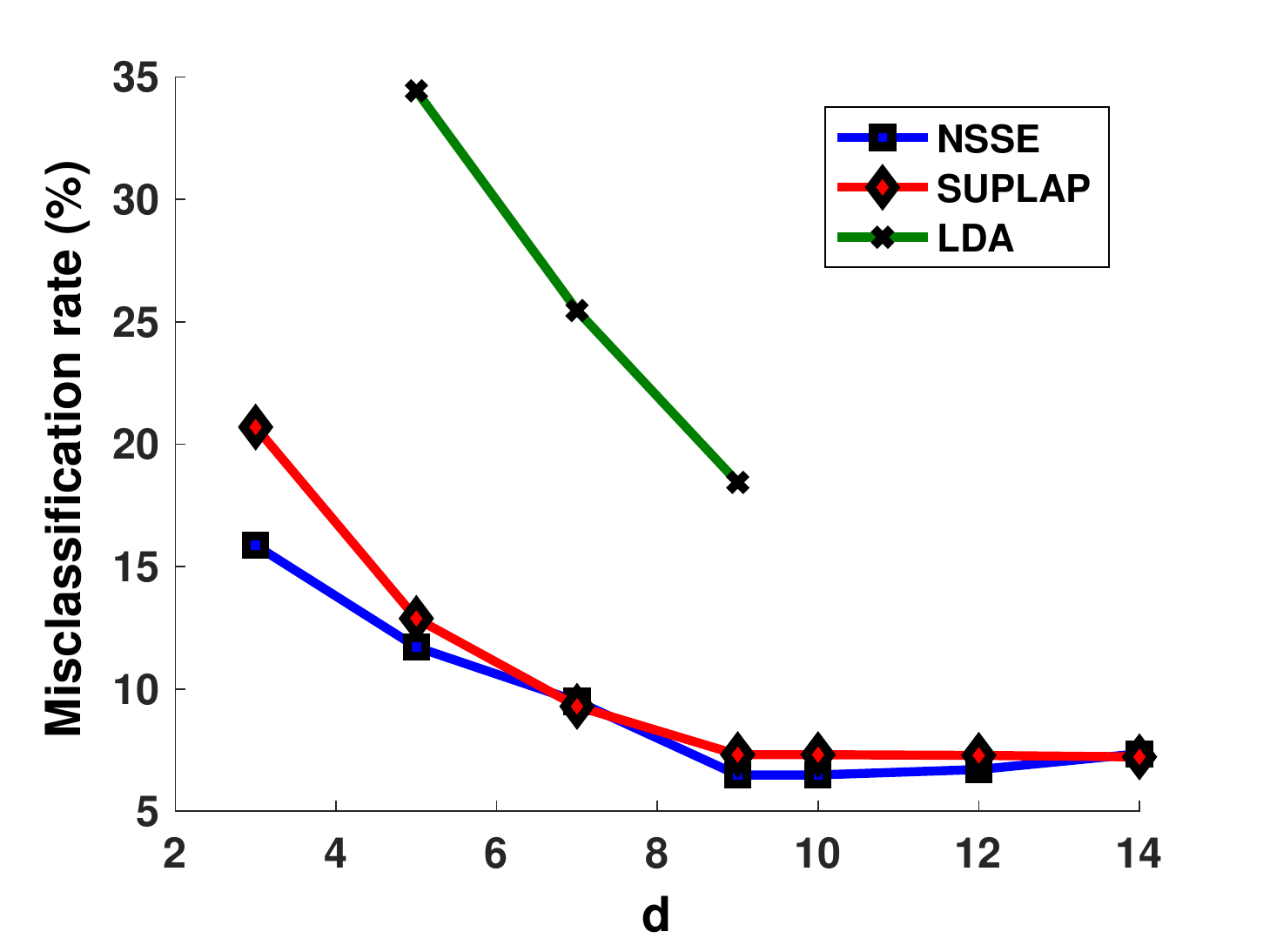}}
     \subfigure[LFDA and LDE]
       {\label{fig:dvsE_tdn10_lfda_lde}\includegraphics[scale=0.34]{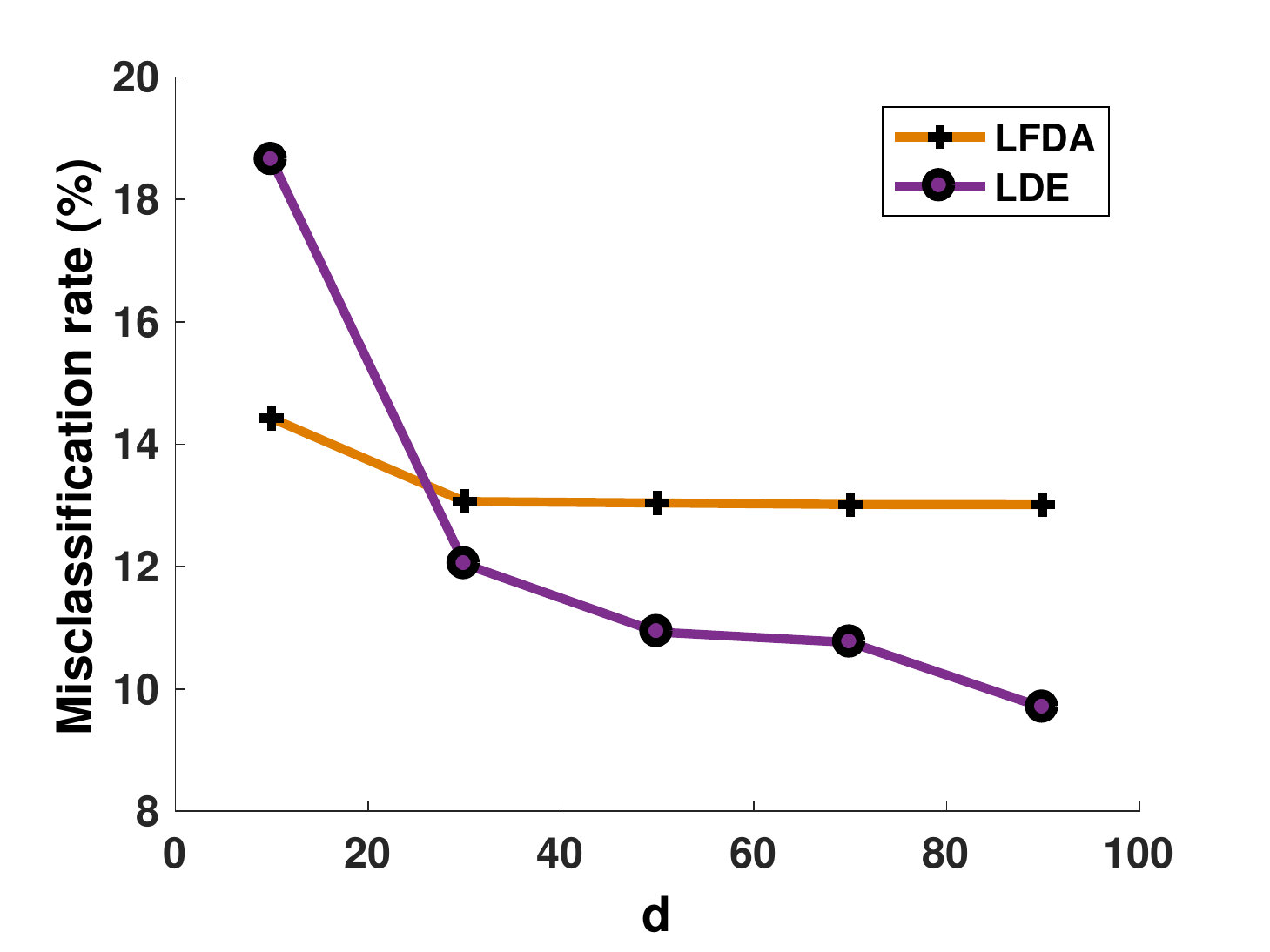}}
       \end{center}
        \vspace{-0.5cm}
 \caption{Variation of the misclassification rate with the embedding dimension in MIT-CBCL data set, with 10 training samples per class}
 \label{fig:error_vs_d_mitcbcl}
\end{figure}

The results in Figures \ref{fig:error_vs_d_yale}-\ref{fig:error_vs_d_mitcbcl} show that the classification accuracy of the proposed NSSE algorithm compares quite favorably to those of the other methods, as NSSE often yields the smallest misclassification rate at the optimal dimension. The misclassification rate of LDA is observed to decrease monotonically with the dimension $d$ and its best performance is attained when $d$ reaches the number of classes. The LDE and LFDA algorithms exhibit their best performances at much higher dimensions compared to the other algorithms. The error rates of these algorithms usually decrease as the embedding dimension increases; however, in some datasets a local optimum for $d$ can also be observed.

Among all methods, the nonlinear NSSE and SUPLAP methods often perform better than the linear LDA, LFDA, and LDE methods. This shows that the flexibility of nonlinear methods when learning an embedding is likely to bring an advantage in computing better representations for data. It is then interesting to compare the performances of the two nonlinear methods; NSSE and SUPLAP. The SUPLAP algorithm attains its best performance when the  dimension $d$ of the embedding is close to the number of classes, while the optimum value of $d$ for the proposed NSSE algorithm is smaller in most data sets. Interestingly, the optimal dimension of NSSE is much smaller than that of SUPLAP in data sets with a low intrinsic dimension such as COIL-20, FEI, and ROBOTICS-CSIE, which are generated by the variation of only one or two camera angle parameters. Similarly, in data sets of larger intrinsic dimension such as MIT-CBCL due to several pose and lighting parameters, the optimal dimension of NSSE is higher and closer to that of SUPLAP. This may suggest that the embedding computed with NSSE tries to capture the intrinsic geometry of data and provides a better representation when the embedding dimension is chosen proportionally to the intrinsic dimension of data.

The reduction of the embedding dimension is desirable especially regarding the complexity of the classification of test samples in a practical application. Another advantage of NSSE over SUPLAP is that NSSE is less sensitive to the choice of the dimension, as the misclassification performance is less affected for non-optimal values of $d$. Such benefits of the proposed NSSE algorithm mainly result from the fact that the Lipschitz continuity of the interpolator is imposed in the learning objective. Consequently, the training samples are embedded more evenly in the low-dimensional space so as to allow the construction of a regular interpolator, which in return reduces the required number of dimensions or the sensitivity to the non-optimal choice of $d$. 

\begin{figure}[t]
\begin{center}
     \subfigure[]
       {\label{fig:nsse_train_embed}\includegraphics[scale=0.3]{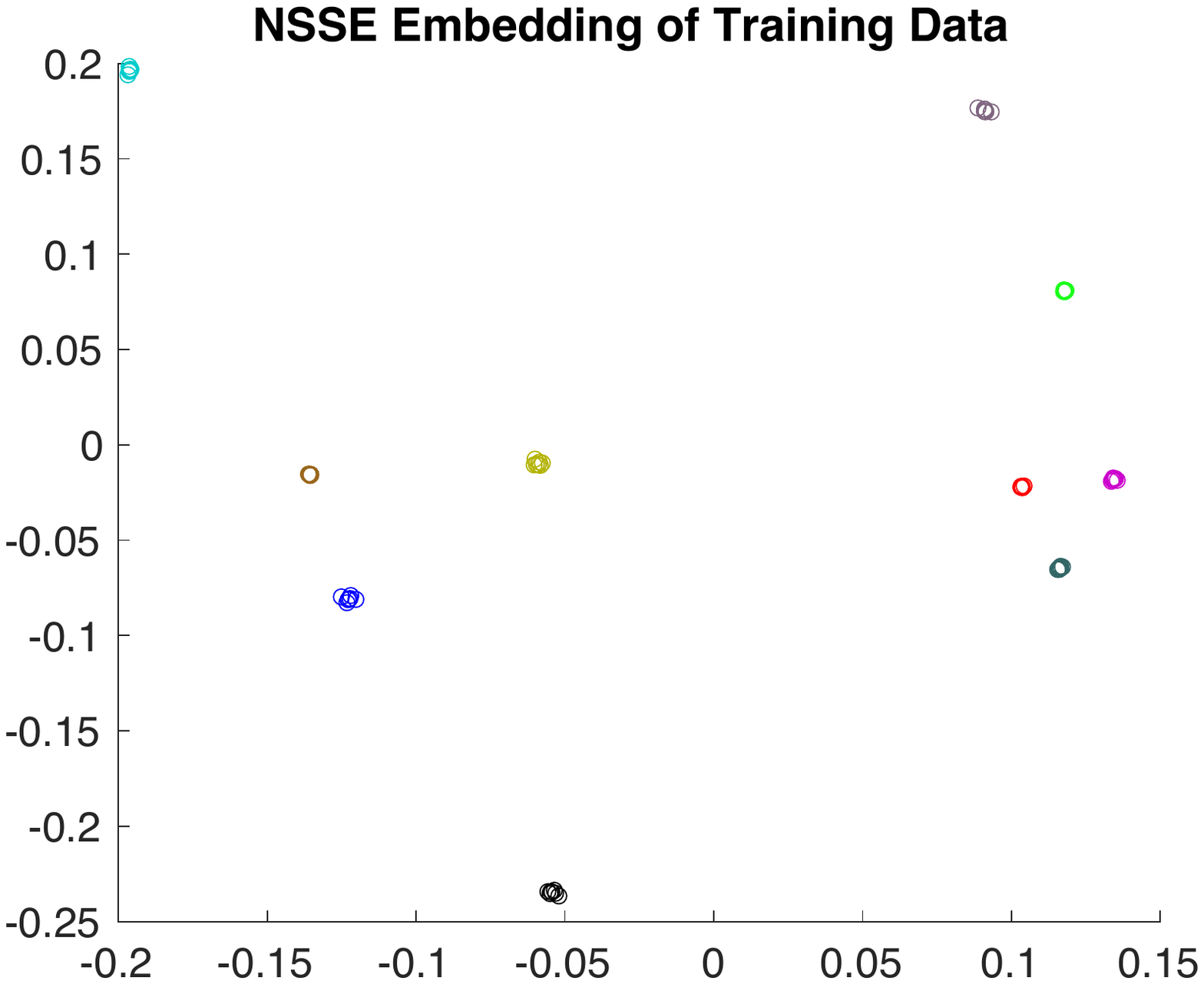}}
     \subfigure[]
       {\label{fig:suplap_train_embed}\includegraphics[scale=0.3]{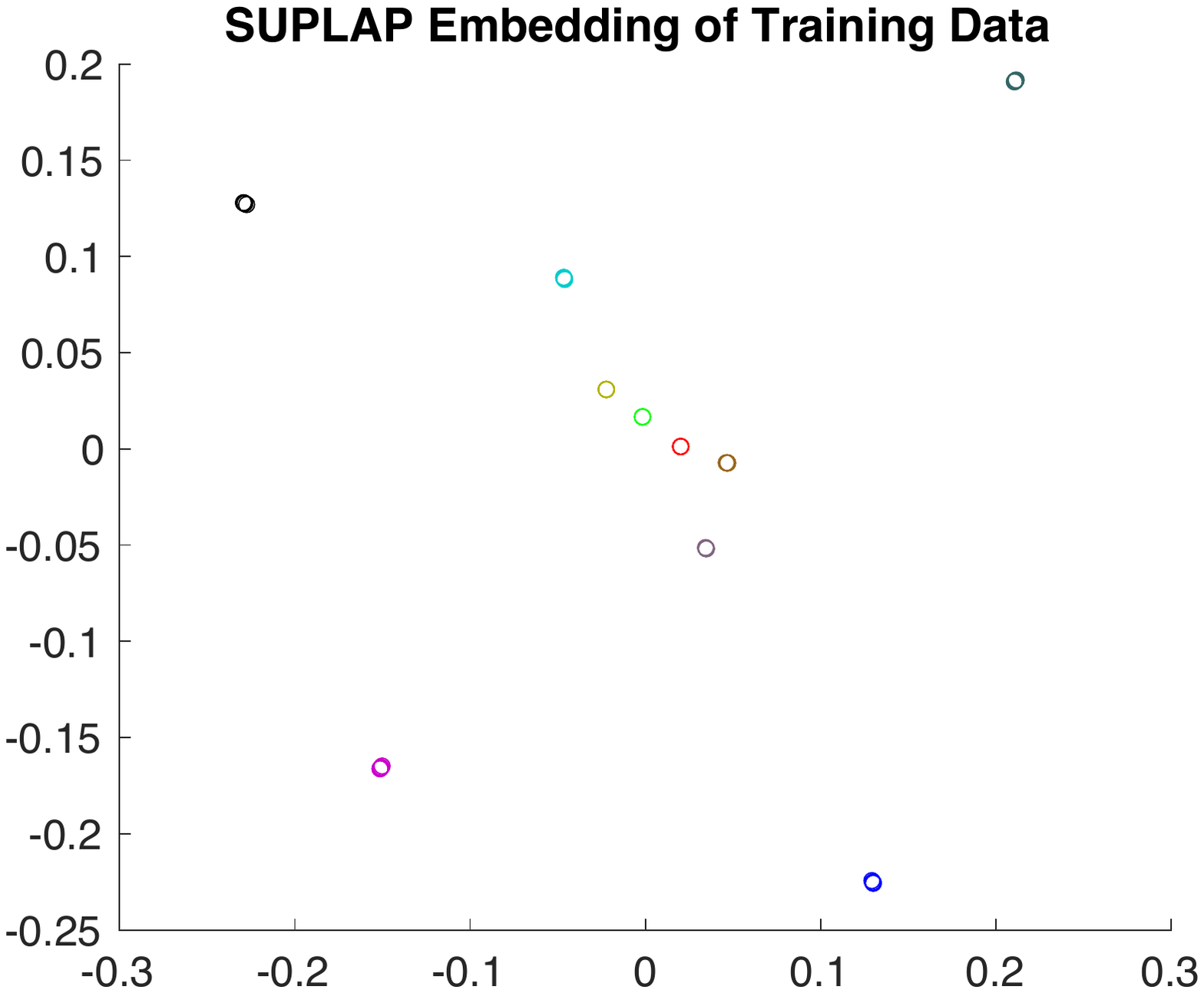}}
      \subfigure[]
       {\label{fig:nsse_test_embed}\includegraphics[scale=0.3]{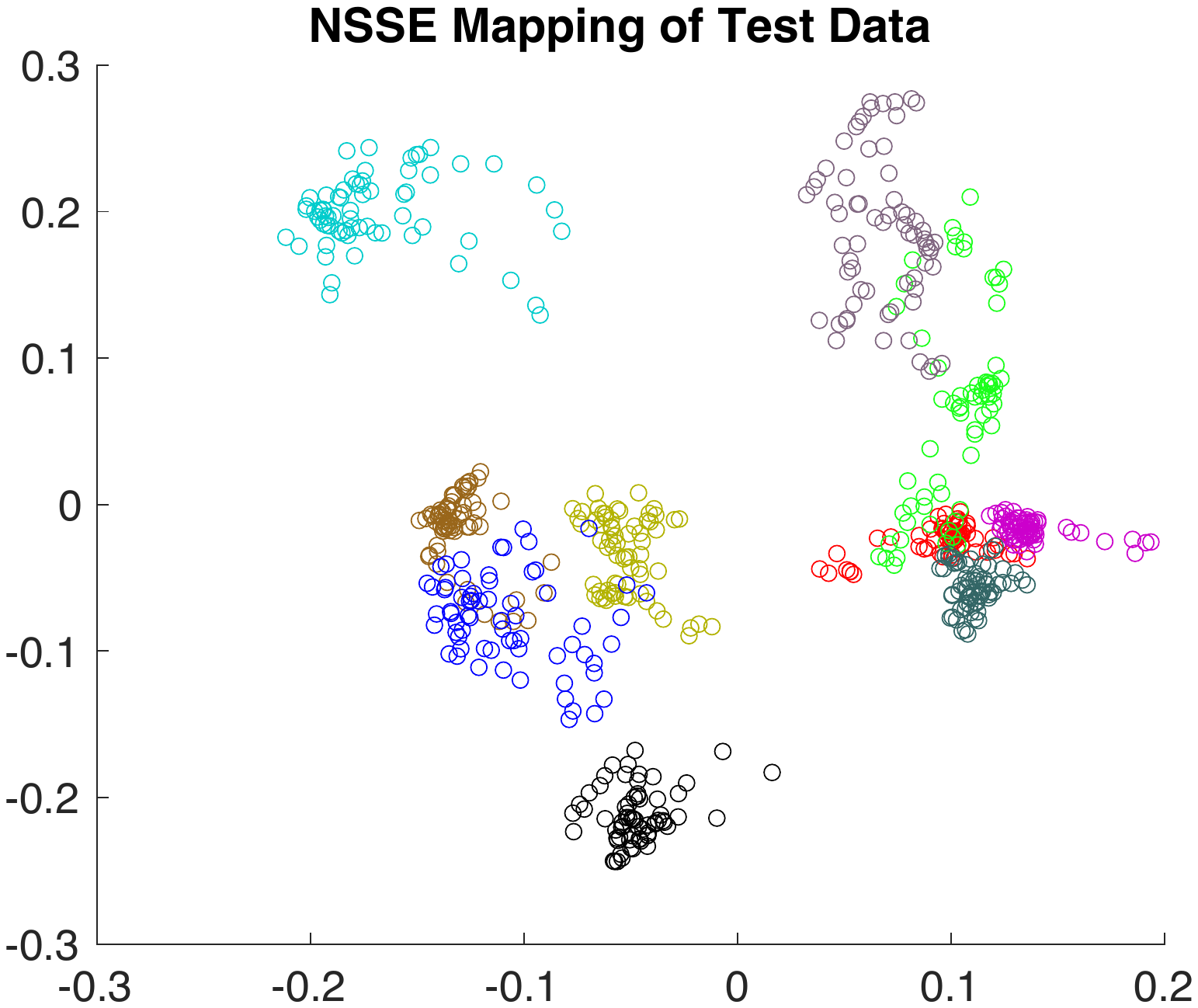}}
     \subfigure[]
       {\label{fig:suplap_test_embed}\includegraphics[scale=0.3]{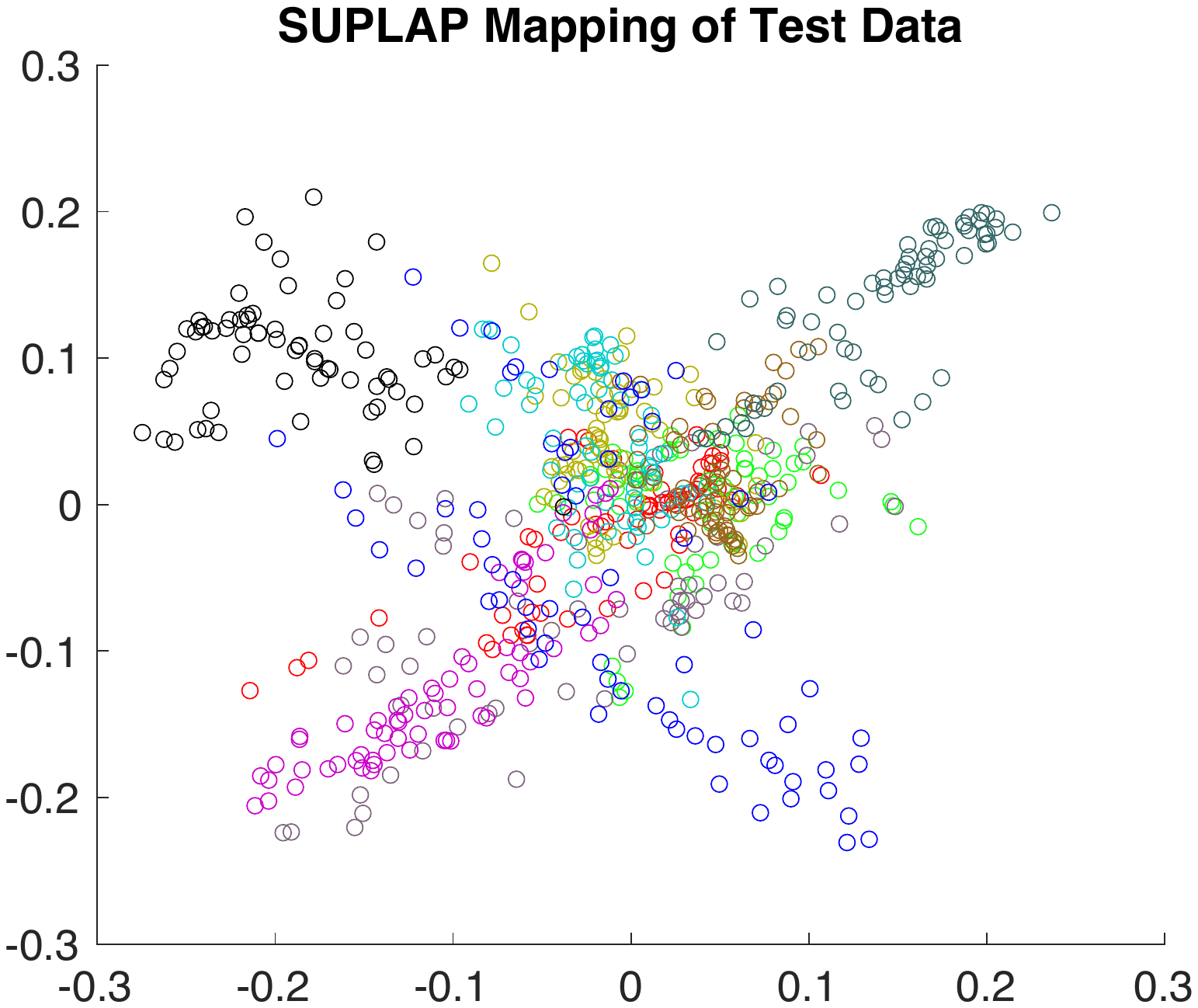}}
       \end{center}
        \vspace{-0.5cm}
 \caption{Visual comparison of the embeddings given by the NSSE and SUPLAP algorithms}
 \label{fig:visual_nsse_suplap_embed}
\end{figure}

In fact, Figure \ref{fig:visual_nsse_suplap_embed} provides a visual comparison of the embeddings obtained with the NSSE and the SUPLAP algorithms. Panels (a) and (b) show the two-dimensional embeddings of 70 training samples from 10 classes of the ROBOTICS-CSIE data set, respectively with the NSSE and the SUPLAP methods. The embeddings of training samples look similar between the two methods, although different classes are more regularly spaced in NSSE. The performance difference between these two methods becomes much clearer when the embeddings of the test samples in panels (c) and (d) are observed. Even at this very small embedding dimension of 2, the NSSE method separates test samples from different classes much more successfully than SUPLAP, which is due to the inclusion of the interpolator parameters in the learning objective in order to attain good generalization performance.

\subsection{Overall comparison with baseline classifiers and supervised dimensionality reduction methods}
\label{ssec:copm}

We now provide an overall comparison of the proposed NSSE method with baseline classifiers and other manifold learning methods. In addition to the supervised manifold learning algorithms used in the experiments of Section \ref{ssec:error_vs_dimension}, we compare NSSE with the SVM classifier in the original domain,  the nearest neighbor (NN) classifier in the original domain, the out-of-sample generalization of the Laplacian eigenmaps embedding with the Nystr\"om method (NYS) \cite{Bengio04}, the out-of-sample generalization of Laplacian eigenmaps with sparse coding (SPE) \cite{DornaikaR13}, the IsoKRR method proposed in \cite{OrsenigoV12} which computes a supervised nonlinear embedding and generalizes it with kernel ridge regression, and the supervised manifold regularization algorithm (MReg)  proposed in \cite{BelkinNS06} based on Reproducing Kernel Hilbert Spaces. The embedding dimensions and other algorithm parameters of the manifold learning methods are set to their optimal values yielding the best performance. The classification errors over test samples are studied by varying the training/test ratio and the results are averaged over 20 realizations of the experiments under different random choices of the training and test sets. 

The misclassification rates of test samples in percentage are presented for the compared methods for different training data sizes in Tables \ref{tab:dvsErr_yale}, \ref{tab:dvsErr_coil20}, \ref{tab:dvsErr_orl}, \ref{tab:dvsErr_fei}, \ref{tab:dvsErr_csie},
\ref{tab:dvsErr_mitcbcl}, respectively for the Yale, COIL-20, ORL, FEI, ROBOTICS-CSIE, and MIT-CBCL data sets. The leftmost columns of the tables show the number of training samples per class used for learning the classifiers. Experiments are conducted over a suitable range of number of training samples for each data set, considering the total number of samples in the data set. The smallest classification error of each experiment is shown in bold.

The proposed NSSE method is observed to outperform the other methods in most data sets. In Table \ref{tab:dvsErr_yale}, out-of-sample generalization with the Nystr\"om method NYS \cite{Bengio04} is seen to be one of the two best performing methods along with MReg \cite{BelkinNS06}, while its performance is behind many others in the other data sets. The extreme illumination changes in the Yale data set lead to degeneracies in the data manifold due to the very high local curvatures and non-differentiability, which seems to pose a challenge for the proposed NSSE method. Meanwhile, the global structure of this data set can in fact be approximated with linear subspace models fairly well, thanks to which an unsupervised out-of-sample extension method such as Nystr\"om achieves good performance on this data. In Tables \ref{tab:dvsErr_coil20}-\ref{tab:dvsErr_mitcbcl}, the proposed NSSE method is seen to yield the best performance in most settings, and the algorithms closest in performance to NSSE are the nonlinear and supervised SUPLAP, IsoKRR, and the MReg methods. Among the supervised manifold learning algorithms, the nonlinear methods seem to outperform the linear ones in general. The linear manifold learning algorithms LFDA, LDA, and LDE exhibit variable performance depending on the data set. As the performances of the algorithms improve with the increase in the number of training samples, these linear manifold learning methods may get outperformed by the baseline SVM and NN classifiers especially when the number of samples is sufficiently high. The performance gap between NSSE and the other nonlinear and supervised MReg, IsoKRR, and SUPLAP methods is more significant in the FEI and ROBOTICS-CSIE datasets containing a large number of classes, especially when the number of training samples is limited. The lack of training samples compared to the large number of classes is likely to lead to degenerate embeddings in nonlinear methods computing a pointwise embedding as in SUPLAP, while the regularization term enforcing the regularity of the interpolator in NSSE proves effective for the prevention of such degeneracies and ensuring the preservation of the overall geometric structure of data in the embedding.

Note that, unlike complex classifiers involving rich models with many parameters to learn, the classifiers obtained with the proposed method consist of a relatively simpler model with fewer parameters to learn. Based on models particularly fit to the priors on the data geometry and dimensionality, the proposed method attains satisfactory classification accuracy on data sets conforming to such low-dimensional models, even when the number of training samples is very limited. The accuracy of the proposed method would inevitably degrade if applied directly to data collections registered under highly uncontrolled settings violating the low-dimensional manifold assumption, e.g., data sets of complex backgrounds, with many different and dissimilar objects belonging to the same class, etc. Nevertheless, the learning of representations that extract the useful and essential information from such data sets registered under challenging conditions is still an open problem, and the proposed method can potentially be coupled with progressing representation learning techniques that can capture the geometric structure of data invariantly to its acquirement conditions.

\begin{table}[!h]
\footnotesize
\begin{tabular}{cccccccccccc}
\hline
 \# Tr & NSSE & SUPLAP & SVM & NN & LFDA & LDA & LDE & NYS &  SPE & IsoKRR & MReg \\
\hline
6 & 22.09 & 23.10 & 29.43 & 63.48 & 26.89 & 35.86 & 20.05 & \bf{19.47} & 63.81 & 22.58 & 19.89 \\
10 & 12.60 & 12.92 & 15.17 & 52.89 & 40.63 & 63.97 & 21.79 & \bf{11.88} & 53.19 & 12.58 & 12.36  \\
15 & 7.52 & 7.95 & 9.09 & 43.57 & 10.78 & 57.87 & 7.95 & 7.84 & 43.41 & 7.11 & \bf{6.90} \\
20 & 5.02 & 5.60 & 6.14 & 37.51 & 7.42 & 52.80 & 5.16 & 6.43 & 38.31 & 4.61 & \bf{4.50} \\
30 & 2.56 & 2.57 & 2.99 & 30.13 & 3.22 & 46.43 & 3.04 & 4.63 & 32.46 & 2.38 & \bf{2.35} \\
\hline
\end{tabular}
\caption{Misclassification rates (\%) of compared methods on Yale database} \label{tab:dvsErr_yale}
\end{table}

\begin{table}[!h]
\footnotesize
\begin{tabular}{cccccccccccc}
\hline
 \# Tr & NSSE & SUPLAP & SVM & NN & LFDA & LDA & LDE & NYS &  SPE & IsoKRR & MReg\\
\hline
7 & \bf{8.09} & 10.97 & 10.38 & 13.90 & 17.93 & 11.84 & 20.86 & 13.87 & 13.90 & 11.17 &8.49 \\
10 & \bf{4.97} & 6.81 & 6.93 & 10.22 & 13.59 & 7.68 & 16.84 & 9.38 & 10.22 & 7.07 & 5.44\\
15 & \bf{2.79} & 3.85 & 4.60 & 6.88 & 11.32 & 4.22 & 14.01 & 5.70 & 6.88 & 3.96 & 3.05\\
20 & \bf{1.25} & 2.04 & 3.23 & 4.51 & 9.53 & 2.29 & 12.64 & 3.34 & 4.54 & 2.00 & 1.31\\
30 & \bf{0.53} & 0.80 & 2.27 & 2.31 & 7.08 & 0.99 & 13.28 & 1.56 & 2.44 & 0.79 & 0.73\\
\hline
\end{tabular}
\caption{Misclassification rates (\%) of compared methods on COIL-20 database} \label{tab:dvsErr_coil20}
\end{table}

\begin{table}[!h]
\footnotesize
\begin{tabular}{cccccccccccc}
\hline
 \# Tr  & NSSE & SUPLAP & SVM & NN & LFDA & LDA & LDE & NYS & SPE & IsoKRR & MReg\\
\hline
2 & \bf{14.11} & 16.04 & 19.74 & 19.34 & 27.70 & 21.18 & 24.92 & 17.03 & 19.34 & 14.81 & 14.85  \\
3 & \bf{8.00} & 9.49 & 10.70 & 12.96 & 14.89 & 13.13 & 12.74 & 11.06 & 12.96 & 8.63 & 8.38  \\
5 & \bf{3.90} & 5.32 & 4.35 & 6.92 & 8.10 & 7.74 & 7.05 & 6.90 & 6.92 & 4.23 & 4.13 \\
\hline
\end{tabular}
\caption{Misclassification rates (\%) of compared methods on ORL database} \label{tab:dvsErr_orl}
\end{table}

\begin{table}[!h]
\footnotesize
\begin{tabular}{cccccccccccc}
\hline
 \# Tr  & NSSE & SUPLAP & SVM & NN & LFDA & LDA & LDE & NYS &  SPE & IsoKRR & MReg \\
\hline
2 & \bf{20.86} & 27.07 & 35.38 & 32.13 & 29.83 & 30.93 & 30.05 & 31.91 & 32.13 & 26.42 & 25.03 \\
4 & \bf{8.05} & 12.46 & 12.85 & 19.45 & 12.90 & 12.56 & 10.80 & 19.20 & 19.45 & 11.06 & 10.77 \\
7 & \bf{5.00} & 6.42 & 9.09 & 10.86 & 9.74 & 5.40 & 7.77 & 11.53 & 11.23 & 5.03 & 5.40 \\
\hline
\end{tabular}
\caption{Misclassification rates (\%) of compared methods on FEI database} \label{tab:dvsErr_fei}
\end{table}

\begin{table}[!h]
\footnotesize
\begin{tabular}{cccccccccccc}
\hline
 \# Tr & NSSE & SUPLAP & SVM & NN & LFDA & LDA & LDE & NYS &  SPE & IsoKRR & MReg\\
\hline
7 & \bf{13.56} & 27.23 & 23.97 & 34.46 & 24.87 & 29.43 & 25.13 & 34.87 & 34.53 & 24.29 & 20.32 \\
14 & \bf{4.38} & 11.74 & 8.78 & 17.80 & 11.97 & 14.15 & 9.74 & 17.36 & 17.84 & 9.04 & 5.86 \\
21 & \bf{2.83} & 6.52 & 4.77 & 10.09 & 6.99 & 10.57 & 5.88 & 9.85 & 9.76 & 4.81 & 3.08 \\
\hline
\end{tabular}
\caption{Misclassification rates (\%) of compared methods on ROBOTICS-CSIE database} \label{tab:dvsErr_csie}
\end{table}

\begin{table}[!h]
\footnotesize
\begin{tabular}{cccccccccccc}
\hline
 \# Tr  & NSSE & SUPLAP & SVM & NN & LFDA & LDA & LDE & NYS &  SPE & IsoKRR & MReg\\
\hline
10 & \bf{6.48} & 7.31 & 9.91 & 14.43 & 12.32 & 18.44 & 9.69 & 15.03 & 14.43 & 6.53 & 6.55 \\
20 & \bf{2.49} & 3.38 & 4.18 & 5.65 & 8.36 & 8.38 & 6.02 & 6.06 & 5.66 & 2.50 & 2.85 \\
40 & 0.77 & 1.22 & 1.52 & 1.46 & 5.29 & 3.18 & 2.97 & 1.84 & 2.05 & \bf{0.71} & 0.97 \\
\hline
\end{tabular}
\caption{Misclassification rates (\%) of compared methods on MIT-CBCL database} \label{tab:dvsErr_mitcbcl}
\end{table}

\begin{table}[!h]
\vspace{0cm}
\hspace{0cm}
\begin{tabular}{ |c|c|c|c|c|c| }
\hline
& COIL-20  & FEI & ROBOTICS-CSIE  \\
\hline
  & 140 | 0.81 sec & 100 | 0.82 sec  & 280 | 2.22 sec   \\
Data size | Running time & 300 | 1.45 sec & 200 | 1.52 sec & 560 | 8.33 sec \\
 & 600 | 5.45 sec & 350 | 3.74 sec & 840 | 14.99 sec \\
\hline 
\end{tabular}
\caption{Running times of the NSSE algorithm observed for several data sizes on three data sets. The data size stands for the total number of training images.}
\label{table:runtimes_NSSE}
\end{table}

Finally, we report the observed computation times for jointly learning an embedding and an interpolator with the proposed NSSE algorithm. The running times obtained for a single run of the NSSE algorithm with a non-optimized MATLAB implementation on a laptop computer are given in Table \ref{table:runtimes_NSSE} for the COIL-20, FEI and ROBOTICS-CSIE data sets, for different number of training images. The observed running times seem to be consistent with the complexity analysis of the method provided in Section \ref{ssec:comp_analysis}.


\section{Conclusion}
\label{sec:conclusion}

We have proposed a nonlinear supervised manifold learning method that learns an embedding of the training data jointly with a smooth RBF interpolation function that extends the embedding to the whole space. The embedding and the interpolator parameters are jointly optimized with the purpose of good generalization to initially unavailable data, based on recent theoretical results on the performance of supervised manifold learning algorithms. In particular, the embedding and the RBF paramaters are learnt such that the interpolator has sufficiently good Lipschitz regularity while the samples from different classes are separated as much as possible. Experiments on image data sets have shown that the proposed method learns embeddings often yielding better classification performance while requiring a smaller number of dimensions in comparison with other supervised manifold learning approaches. Thanks to the priors on the Lipschitz regularity of the interpolator incorporated in the learning objective, the proposed method can learn efficient representations even under limited availability of training samples, and is relatively robust to conditions such as the non-optimal choice of the embedding dimension and unfavorable initialization of the interpolator parameters. Our study shows that nonlinear mappings are promising in supervised dimensionality reduction, and taking into account the generalizability of the embedding explicitly in the learning objective highly improves the classification performance.

\footnotesize
\bibliography{mybibfile}

\end{document}